\documentclass[10pt]{article}
\usepackage{graphicx}
\graphicspath{{C:/Users/whsigris/Dropbox/HSLU/Projects/MixedBoost/plots_presentation/}}
\usepackage{natbib}
\usepackage{url} 
\usepackage{amssymb,amsmath,amsthm,graphicx,bbm}
\usepackage[linesnumbered,ruled]{algorithm2e}
\usepackage{algorithmic}
\usepackage[toc,page]{appendix}

\DeclareMathOperator*{\argmin}{argmin}

\newtheorem{theorem}{Theorem}[section]

\newtheorem{proposition}[theorem]{Proposition}

\newcommand{\blind}{0}

\addtolength{\oddsidemargin}{-.5in}%
\addtolength{\evensidemargin}{-.5in}%
\addtolength{\textwidth}{1in}%
\addtolength{\textheight}{1.3in}%
\addtolength{\topmargin}{-.8in}%

\begin{document}

\def\spacingset#1{\renewcommand{\baselinestretch}%
{#1}\small\normalsize} \spacingset{1}

\if0\blind
{
  \title{\bf Gaussian Process Boosting}
  
  \author{Fabio Sigrist\thanks{Email: fabio.sigrist@hslu.ch. Address: Lucerne University of Applied Sciences and Arts, Suurstoffi 1, 6343 Rotkreuz, Switzerland.}\\
   Lucerne University of Applied Sciences and Arts}
\date{}
  \maketitle
}
\fi

\bigskip

\spacingset{1}

\providecommand{\keywords}[1]
{
	\small
	\textbf{\textit{Keywords:}} #1
	\normalsize
}

\begin{abstract}%
We introduce a novel way to combine boosting with Gaussian process and mixed effects models. This allows for relaxing, first, the zero or linearity assumption for the prior mean function in Gaussian process and grouped random effects models in a flexible non-parametric way and, second, the independence assumption made in most boosting algorithms. The former is advantageous for prediction accuracy and for avoiding model misspecifications. The latter is important for efficient learning of the fixed effects predictor function and for obtaining probabilistic predictions. Our proposed algorithm is also a novel solution for handling high-cardinality categorical variables in tree-boosting. In addition, we present an extension that scales to large data using a Vecchia approximation for the Gaussian process model relying on novel results for covariance parameter inference. We obtain increased prediction accuracy compared to existing approaches on multiple simulated and real-world data sets.
\end{abstract}

\keywords{
	non-linear mixed effects models, mixed effects machine learning, grouped random effects, longitudinal data, spatial and spatio-temporal data, tree-boosting with high-cardinality categorical variables
}

\section{Introduction}\label{intro}
Boosting \citep{freund1996experiments, friedman2001greedy} is a machine learning technique that often achieves superior prediction accuracy on tabular data sets \citep{nielsen2016tree, shwartz2021tabular, grinsztajn2022tree}. This is reflected in statements such as ``[i]n general `boosted decision trees' is regarded as the most effective off-the-shelf nonlinear learning method for a wide range of application problems"  \citep{johnson2013learning}. Apart from this, the wide adoption of tree-boosting in applied machine learning and data science is due to several advantages: boosting with trees as base learners can automatically account for complex non-linearities, discontinuities, and high-order interactions, it is robust to outliers in and multicollinearity among predictor variables, it is scale-invariant to monotone transformations of the predictor variables, it can handle missing values in predictor variables automatically by loosing almost no information \citep{elith2008working}, and it can perform variable selection. 

In boosting and in many other state-of-the-art supervised machine learning algorithms, a flexible and potentially complex function relates a set of predictor variables to a response variable. However, the response variable is usually assumed to be independent across observations conditional on the predictor variables. This means that potential residual correlation, i.e., correlation that is not accounted for by the predictor function, is ignored. As we show in our experiments, modeling such correlation allows for more efficient learning of the predictor function, and it is important for prediction accuracy, in particular for probabilistic predictions and for predicting sums (or averages) over, e.g., space. Example applications of the latter include the prediction of global average temperatures, the total rainfall in a catchment area, or the total value of a portfolio of real estate objects. Apart from the potentially unrealistic independence assumption, tree-boosting can have difficulty with high-cardinality categorical variables and predictions are discontinuous. The latter is often unrealistic for spatial and spatio-temporal data.

Gaussian processes \citep{williams2006gaussian} are flexible non-parametric function models that achieve state-of-the-art prediction accuracy and allow for making probabilistic predictions \citep{gneiting2007probabilistic}. They are used in areas such as (i) non-parametric regression, (ii) modeling of time series, spatial, and spatio-temporal data \citep{shumway2017time, banerjee2014hierarchical, cressie2015statistics}, (iii) emulation of large computer experiments \citep{kennedy2001bayesian}, (iv) optimization of expensive black-box functions \citep{jones1998efficient}, (v) parameter tuning in machine learning models \citep{snoek2012practical}, and others. Further, mixed effects models \citep{laird1982random, pinheiro2006mixed} with grouped, or clustered, random effects are widely used in various scientific disciplines for modeling data with a grouping structure such as panel and longitudinal data. 

In Gaussian process and mixed effects models, the prior mean is traditionally assumed to be either zero or a linear function of predictor variables. Residual structured variation is then modeled using a zero-mean Gaussian process and/or a grouped random effects model. However, both the zero-mean and the linearity assumption can be unrealistic, and higher prediction accuracy can be obtained by relaxing these assumptions; see, e.g., our experiments in Sections \ref{simul} and \ref{data_appl}. Furthermore, if the prior mean function of a Gaussian process model is misspecified, spurious second-order non-stationarity can occur as the covariance function of a misspecified model equals the true covariance function plus the squared bias of the mean function \citep{fuglstad2015does, schmidt2020flexible}. It is thus important to first correctly model the prior mean function before accounting for potential residual second-order non-stationarity. 


In this article, we propose a novel way to combine boosting with Gaussian process and grouped random effects models to remedy the above-mentioned drawbacks and leverage the advantages of both approaches. In particular, the goal is to relax, first, the linearity or zero prior mean assumption in Gaussian process and mixed effects models and, second, the independence assumption in boosting. This is done by considering a mixed effects model where the prior mean fixed effects function, also denoted as predictor function in the following, is assumed to be a non-parametric function of predictor variables, and the random effects can consist of various combinations of Gaussian processes and grouped random effects. We propose to model the non-linear predictor function by an ensemble of base learners, such as regression trees \citep{breiman1984classification}, learned in a stage-wise manner using boosting, and the parameters of the covariance structure of the random effects are jointly estimated with the predictor function. As every categorical variable corresponds to a grouping of the data, grouped random effects models can be seen as a tool for modeling categorical variables. In light of this, we also present a novel approach for dealing with high-cardinality categorical predictor variables in tree-boosting.

\subsection{Relation to existing work}\label{exist_work}
The majority of existing Gaussian process and mixed effects models assume that the prior mean fixed effects function is linear or zero. Little research has been done on combining modern supervised machine learning techniques, such as boosting or random forest, with mixed effects models and Gaussian processes. In the following, we give a brief review of existing literature for mixed effects models where a non-linear predictor function is estimated in a flexible, non-parametric way.

To relax the linearity assumption in mixed effects models, \citet{tutz2007boosting} and \citet{groll2012regularization} propose to use generalized additive models (GAMs) \citep{Hastie1986, wood2017generalized}. However, the structure of the predictor function has to be determined a priori by specifying, for instance, main and second-order interaction effects. In general, this can thus result in model misspecification. For the special case of grouped random effects models for clustered or longitudinal data, several non-parametric, machine learning-based approaches have been proposed. This includes \citet{hajjem2011mixed}, \citet{sela2012re}, and \citet{fu2015unbiased} which use regression trees for the predictor function, and \citet{hajjem2014mixed} which use random forest to model the predictor function. 
Both the MERT and MERF algorithms of \citet{hajjem2011mixed, hajjem2014mixed} and the RE-EM tree algorithm of \citet{sela2012re} repeatedly (i) learn the predictor function using a machine learning technique, (ii) estimate covariance parameters, and (iii) calculate predictions for random effects. This can make such approaches computationally demanding, in particular when the sample size is large and the predictor function is modeled using a complex model because both covariance parameters and the fixed effects functions are repeatedly learned in every iteration. Since it is not clearly stated in the previous literature which optimization problems are solved by these algorithms and how they are related to the EM algorithm for mixed effects models \citep{laird1982random}, we elaborate on this in Appendix \ref{MERFalgo}. \citet{pande2017boosted} propose a tree-boosting approach called boostmtree for a special type of mixed effects models for longitudinal data with single-level grouped random effects and where predictor variables are assumed to be constant within subjects. Focusing on modeling complex interactions between time and predictor variables, their approach differs from ours in several directions. For instance, they do not cover Gaussian processes or other forms of random effects models with complex clustering such as crossed or nested random effects. Furthermore, they (re-)estimate the covariance parameters in every boosting iteration using the \texttt{nlme} R package, and their boostmtree algorithm uses a certain form of ``in-sample cross-validation" to avoid overfitting. Table \ref{compare_models} provides a brief overview of the above-mentioned methods for linear and non-linear mixed effects models.


\begin{table}[ht!]
\centering
\begingroup
\scalebox{0.9}{
\begin{tabular}{lll}
\hline
\hline
\bf Method &  \bf Fixed effects $F(\cdot)$  & \bf Random effects $Zb$ \\ 
\hline
LMM & Linear & Grouped random effects\\
\hline
Gaussian processes (GP) & Linear or $F(\cdot)\equiv 0$ & GP\\
\hline
 \begin{tabular}{@{}l@{}} Grouped random effects \\ with GAM fixed effects \end{tabular} & Non-linear, GAM & Grouped random effects\\
\hline
Classical boosting & Non-linear, boosting & None\\
\hline
MERT & Non-linear, regression tree & Grouped random effects \\
\hline
RE-EM tree & Non-linear, regression tree & Grouped random effects \\
\hline
MERF & Non-linear, random forest & Grouped random effects \\
\hline
boostmtree & \begin{tabular}{@{}l@{}}Non-linear, boosting, \\predictor variables are \\ constant within groups \end{tabular} & \begin{tabular}{@{}l@{}} Special type of grouped \\ random effects for \\ longitudinal data \end{tabular} \\
\hline
GPBoost & Non-linear, boosting &  \begin{tabular}{@{}l@{}} GP and grouped \\ random effects \end{tabular}  \\
\hline
\hline
\end{tabular}
}
\endgroup
\caption{Short overview of methods for linear and non-linear mixed effects models.} 
\label{compare_models}
\end{table}


A straightforward approach to applying machine learning methods to spatial and grouped data consists of simply including spatial locations as continuous variables and grouping variables as categorical variables in the set of predictor variables for the predictor function. For linear models, this is equivalent to using fixed effects instead of random effects. However, this approach has several drawbacks. For instance, when modeling spatial data, it is often required that the spatial effect is continuous over space, but tree-boosting and random forest produce discontinuous functions. A way to avoid this problem in boosting is to use base learners that are continuous in the locations. This is the approach proposed in \citet{hothorn2010model} where splines are used to model spatial effects and ridge regression is used to model grouped effects. As we argue in the following, this approach still has some weaknesses. First, all boosting approaches which model spatial and grouped effects using deterministic base learners have the drawback that they assume a deterministic relationship and that the residual error term is the only source of stochastic variation. In contrast, Gaussian process and mixed effects models are probabilistic models that explicitly account for correlations and provide probabilistic predictions. Modeling correlation is particularly important for probabilistic predictions of, e.g., areal sums as correlation between the different random effects needs to be taken into account; see our experiments in Sections \ref{simul} and \ref{data_appl}. Besides, estimates are usually less efficient when using fixed effects instead of random effects in linear models. It is thus likely that the predictor function is also less efficiently estimated in such an independent ``fixed-effects" boosting approach. Our simulated experiments in Section \ref{simul} support this hypothesis.  Moreover, splines have the disadvantage that they suffer from the so-called curse of dimensionality in non-spatial applications when the dimension of the ``locations", or features, is large, and the locations are thus sparse in space; see Section \ref{intro} for examples where this occurs.

The above-mentioned efficiency problem of fixed effects models is related to the fact that classical tree-boosting algorithms can have difficulties with high-cardinality categorical variables. Concerning this, the approach of the \texttt{LightGBM} boosting library \citep{ke2017lightgbm} works by partitioning all groups into two subsets when finding splits in the tree-building algorithm. Since there are $2^{m-1}$ possible partitions of $m$ different groups, an efficient approach based on \citet{fisher1958grouping} is used in \texttt{LightGBM}. Further, \citet{CatBoost2017} present an approach based on ordered target statistics calculated using random partitions of the training data for handling categorical predictor variables.



Finally, we also mention \citet{song2005new} who use Gaussian processes as base learners in special types of boosting algorithms, and \citet{sigrist2019KTBoost} who combines reproducing kernel Hilbert space (RKHS) regression functions with trees as base learners in boosting algorithms. However, these approaches using Gaussian processes or their deterministic counterparts, kernel machines, as base learners assume conditional independence among observations and are thus different from our approach.

\section{A Non-linear and Non-parametric Mixed Effects Model}\label{model_def}
\subsection{Model assumptions and notation}\label{model_assum}
Following the terminology used in the mixed effects models literature \citep{laird1982random, pinheiro2006mixed}, we assume a mixed effects model of the form
\begin{equation}\label{modunit}
y=F(X)+Zb+\epsilon,~~b\sim \mathcal{N}(0,\Sigma),~~\epsilon \sim \mathcal{N}(0,\sigma^2I_{n}),
\end{equation}
where $y=(y_{1},\dots,y_{n})^T\in\mathbb{R}^{n}$ is the response variable, $F(X)\in\mathbb{R}^{n}$ are called fixed effects, $b\in\mathbb{R}^{m}$ are zero-mean random effects with covariance matrix $\Sigma\in \mathbb{R}^{m\times m}$, and $\epsilon=(\epsilon_1,\dots,\epsilon_n)^T\in\mathbb{R}^{n}$ is an independent error term. Specifically, $F(X)$ is the row-wise evaluation of a function $F:\mathbb{R}^p\rightarrow\mathbb{R}$, $F(X)=(F(X_1),\dots,F(X_n))^T$, where $X_{i}=(X_{i1}\dots,X_{ip})^T\in\mathbb{R}^{p}$ is the $i$-th row of $X$ containing predictor variables for observation $i$, $i=1,\dots,n$. The matrices $X\in\mathbb{R}^{n\times p}$ and $Z\in\mathbb{R}^{n\times m}$ are the fixed and random effects predictor variable matrices. Further, $n$ denotes the number of data points, $m$ denotes the dimension of the random effects $b$, and $p$ denotes the number of predictor variables in $X$. 

The random effects vector $b$ is either a finite-dimensional version of a Gaussian process and/or it can contain grouped random effects.\footnote{We assume that the random effects follow a normal distribution, but moderate violations of this assumption have been shown to have only a small effect on prediction accuracy in the context of generalized linear mixed models \citep{mcculloch2011misspecifying}.} The covariance matrix $\text{Cov}(b)=\Sigma$ is usually assumed to be parametrized by a relatively low number of parameters, and it can depend on predictor variables $S\in \mathbb{R}^{n\times d}$. For instance, the predictor variables $S$ can consist of locations and time points for spatial and time series data, or they can be any input features in machine learning applications. For notational simplicity, we suppress the dependence of $\Sigma$ on its parameters and also on $S$. The matrix $Z$ relates the random effects $b$ to the response variable $y$ and it is often an incidence matrix with entries in $\{0,1\}$. In the case of grouped random effects, the columns of $Z$ correspond to dummy variables, also called one-hot encoded variables, of the categorical variables that define the grouping structure. For Gaussian processes, $Z$ is usually simply an identity matrix, but it can also be a binary incidence matrix to model multiple observations at the same locations. Further, $Z$ can also contain covariate data, e.g., in the case of random coefficient models \citep{gelfand2003spatial, pinheiro2006mixed} also called spatially varying coefficient models for spatial data. The predictor variables in $Z$ and $S$ may or may not be a subset of the predictor variables in $X$, and vice versa. In Section \ref{examples_mod}, we outline several examples and special cases of models for $Zb$. Note that conditional on $F(X)$ and $Z$, dependence among the response variable $y$ can arise either due to the matrix $Z$ being non-diagonal or due to the covariance matrix $\Sigma$ of the random effects being non-diagonal.
 
In linear mixed effects models (LMMs) and linear Gaussian process models in spatial statistics, it is assumed that $F(X)=X^T\beta$, where $\beta\in \mathbb{R}^p$ is a vector of coefficients. We denote models with $F(X)=X^T\beta$ as linear models in this article. Further, in the machine learning literature on Gaussian processes, is usually assumed that the prior mean function is zero, $F(X)\equiv 0$. As mentioned above, one of the novel contributions of our approach is to relax this linearity or zero prior mean assumption in a flexible and non-parametric manner.

Concerning $F(\cdot)$, we assume that $F(\cdot)$ is a function in a function space $\mathcal{H}$ which is the linear span of a set $\mathcal{S}$ of so-called base learners $f_j(\cdot):\mathbb{R}^p\rightarrow \mathbb{R}$. Possible base learners are linear functions \citep{buehlmann2006boosting}, smoothing splines \citep{buhlmann2003boosting}, wavelets \citep{saberian2011taylorboost}, reproducing kernel Hilbert space (RKHS) regression functions \citep{sigrist2019KTBoost}, neural networks \citep{schwenk2000boosting}, and regression trees \citep{breiman1984classification}, with the latter being the most popular choice, in particular, in applied machine learning due to the advantages listed in Section \ref{intro}. For defining functional derivatives, we additionally assume that the space $\mathcal{H}$ is normed. For instance, assuming that the $X_i$'s are identically distributed and that all $F\in\mathcal{H}$ are square integrable with respect to the law of $X_1$, a norm on $\mathcal{H}$ can defined by the inner product $\langle F,G\rangle = E_{X_1}(F(X_1)G(X_1))$ for $F,G\in \mathcal{H}$. 

The marginal distribution of $y$ in \eqref{modunit} is 
\begin{equation*}
y \sim \mathcal{N}\left(F(X),\Psi\right), ~~~~\Psi=Z\Sigma  Z^T+ \sigma^2 I_{n},
\end{equation*}
and the negative log-likelihood of this model is given by
\begin{equation}\label{loss_def}
L(y,F,\theta)=\frac{1}{2}(y-F)^T{\Psi}^{-1}(y-F)+\frac{1}{2}\log\det\left(\Psi\right)+\frac{n}{2}\log(2\pi),
\end{equation}
where we abbreviate $F=F(X)$ and $\theta\in\Theta\subset \mathbb{R}^q$ denotes all variance and covariance parameters, i.e., $\sigma^2$ and the parameters of $\Sigma$. In this article, we assume that the first element of $\theta$ is the error variance, $\theta_1=\sigma^2$. Further, to distinguish a function from its evaluation, we use the symbols ``$F(\cdot)$" to denote a function and ``$F$" for the function evaluated at $X$. 

Factoring out the error variance $\sigma^2$ by setting 
$$\Sigma^\dagger=\Sigma/\sigma^2 ~~~~ \text{and} ~~~~ \Psi^\dagger=\Psi/\sigma^2$$
gives the equivalent form
\begin{equation*}
y \sim \mathcal{N}\left(F(X),\sigma^2\Psi^\dagger\right).
\end{equation*}
As this leads to a closed form expression for estimating $\sigma^2$ (see Section \ref{GPB}), it can be beneficial to use a reparametrization in which, after factoring out $\sigma^2$, all other variance parameters are replaced by the ratio of their original value and the error variance $\sigma^2$, and $\Psi^\dagger$ does not depend on $\sigma^2$. Calculations in the \texttt{GPBoost} library (see Section \ref{software}), which implements our methodology, are based on this reparametrization.


\subsection{Examples and special cases of random effects models}\label{examples_mod}
In this section, we outline several examples of random effects models. We distinguish between the following broad classes of random effects models: (i) grouped random effects models, (ii) Gaussian process models, and (iii) combinations of these two types of random effects. 

\subsubsection{Grouped random effects model}
Grouped, or clustered, random effects models are applied to grouped data. For instance, a grouping of data can occur if there are several units each having multiple observations which are potentially dependent within units. Grouped random effects then account for correlation due to this grouping structure. Depending on the application area, such grouped data is also denoted as longitudinal, panel, or repeated measurement data if different units are repeatedly measured over time.

In a single-level grouped random effects model, there is a single grouping of the data into $m$ different groups. For every group $j$, $j=1,\dots,m$, there is a random effect $b_j\in\mathbb{R}$, and the random effects $b_j$ are assumed to be independent and identically distributed with $\text{Cov}(b)=\Sigma=\sigma_1^2 I_m$. We thus have
$$\Psi=\sigma_1^2 Z Z^T+ \sigma^2I_{n},$$
where the matrix $Z\in \{0,1\}^{n\times m}$ is an incidence matrix that relates group-level
random effects to observations. Such a single-level model can be easily extended to allow for multiple random effects which are crossed and hierarchically nested and can also consist of random coefficients (random slopes). In the latter case, $Z$ is no longer a binary incidence matrix but contains covariate data.

Note that every categorical variable of cardinality $m$ corresponds to a grouping of the data into $m$ different groups, and the columns of $Z$ correspond to dummy, or one-hot encoded, variables of the categorical variable representing the grouping structure. In general, grouped random effects models can thus be used to model (high-cardinality) categorical predictor variables.

\subsubsection{Gaussian process model}\label{gaussproc}
In a Gaussian process model, the random effects $b=(b(s_1),\dots,b(s_m))^T$ are a finite-dimensional version of a Gaussian process $b(s)$ with a covariance function
$$Cov(b(s),b(s'))=c(s,s'),~~s,s'\in \mathbb{R}^{d},$$
observed at locations $s_1,\dots,s_m$. Note that we use the terminology ``locations" adopted in spatial statistics, but the locations can in general consist of non-spatial predictor variables also called input features in machine learning. In particular, the locations can also contain the predictor variables $X_i$ or a subset of them.

Often, the covariance function is assumed to be second-order stationary, mean-square continuous, and parametrized of the form
$$c(s,s')=\sigma_1^2 r(\|s-s'\|/\rho),$$
where $r$ is an isotropic autocorrelation function with $r(0)=1$, $\sigma_1^2=Var(b(s))$, and $\rho$ is a so-called range parameter that determines how fast the autocorrelation decays with distance. Examples of autocorrelation functions include the exponential function $r(\|s-s'\|/\rho)=\exp(-\|s-s'\|/\rho)$ and the Gaussian function $r(\|s-s'\|/\rho)=\exp\left(-\left(\|s-s'\|/\rho\right)^2\right)$. The extension to more general covariance functions and also to multivariate Gaussian processes is straightforward, and our methodology presented in Section \ref{estimation} does not rely on stationarity assumptions. For a stationary Gaussian process, we obtain the following covariance matrix
$$\Psi=Z\Sigma Z^T+ \sigma^2I_{n},$$
where $\Sigma\in \mathbb{R}^{m\times m}$ has entries
$$\left(\Sigma\right)_{jk}=\sigma_1^2 r(d_{jk}/\rho)$$
and $d_{jk}=\|s_j-s_k\|$, $j,k=1,\dots,m$.

\subsubsection{Joint grouped random effects and Gaussian process models}
Grouped random effects and Gaussian processes can also be combined. For instance, such models are used for repeated measures and longitudinal data. One can assume that within groups, there is temporal and/or spatial dependence modeled by a Gaussian process. In a single-level grouped random effect model, this means that every group contains a Gaussian process and the different Gaussian processes are independent among each other. Alternatively, one can assume that there is a single global Gaussian process in combination with grouped random effects, i.e. the same Gaussian process is related to all observations and it accounts, for instance, for spatial or temporal correlation among all observations. 

\section{Combining Gaussian Process and Mixed Effects Models with Boosting}\label{estimation}
Our goal is to find the joint minimizer
\begin{equation}\label{optim_def}
(\hat F(\cdot), \hat \theta) =\argmin_{(F(\cdot),\theta) \in (\mathcal{H},\Theta)}R(F(\cdot),\theta),
\end{equation}
where $R(F(\cdot),\theta)$ is a risk functional defined as
\begin{equation}\label{obj_func}
R(F(\cdot),\theta): ~~(F(\cdot), \theta) ~\mapsto~ L(y,F,\theta)\Big|_{F=F(X)},
\end{equation}
and $L(y,F,\theta)$ is a loss function corresponding to the negative log-likelihood given in \eqref{loss_def}. Note that $R(F(\cdot),\theta)$ is determined by evaluating $F(\cdot)$ at $X$ and then calculating $L(y,F=F(X),\theta)$. We recall that $F(\cdot)$ is a function in a function space $\mathcal{H}$ and $\theta\in\Theta\subset \mathbb{R}^q$ is a vector of all variance and covariance parameters. The risk functional $R(F(\cdot),\theta)$ is, in general, infinite-dimensional in its first argument and finite-dimensional in its second argument. We propose to do the minimization in \eqref{optim_def} using a boosting algorithm presented in the following.

\subsection{Boosting for $\theta$ fixed}\label{mult_gauss_loss}
We first show how boosting \citep{freund1996experiments, breiman1998arcing,friedman2000additive,mason2000boosting, friedman2001greedy, buhlmann2007boosting} works in our case when the covariance parameters $\theta$ are given. For fixed $\theta$, boosting finds a minimizer of the empirical risk functional $R(F(\cdot),\theta)$ in a stagewise manner by sequentially adding an update $f_m(\cdot)$ to the current estimate $F_{m-1}(\cdot)$:
\begin{equation}\label{boostupdate}
F_m(\cdot)= F_{m-1}(\cdot)+ f_m(\cdot),~~f_m\in \mathcal{S}, ~~m=1,\dots,M,
\end{equation}
where $f_m(\cdot)$ is chosen in a way such that its addition results in the minimization of the risk. This minimization can usually not be done analytically and, consequently, an approximation is used. In most boosting algorithms, such an approximation is obtained using either a penalized functional first-order or a functional second-order Taylor expansion of the risk functional around the current estimate $F_{m-1}(\cdot)$. This then corresponds to functional gradient descent or functional Newton steps. It is also possible to combine gradient and Newton steps by first learning part of the parameters of the base learners using a gradient step and the remaining part using a Newton update \citep{friedman2001greedy}. See \citet{sigrist2018gradient} for more information on the distinction between gradient, Newton, and hybrid gradient-Newton boosting. In contrast to classical boosting algorithms, the risk functional in \eqref{obj_func} is not a sum of $n$ independent terms as all observations $y$ are dependent conditional on $X$. In other words, we generally have only one independent multivariate sample instead of $n$ independent samples.

In gradient boosting, $f_m(\cdot)$ is found as the least squares approximation to the vector obtained when evaluating the negative functional gradient, i.e., the negative G\^ateaux derivative, of $R(F(\cdot),\theta)$ at $(F_{m-1}(\cdot), I_{X_i}(\cdot))$, $i=1,\dots,n$, where $I_{X_i}(\cdot)$ are indicator functions which equal $1$ at $X_i$ and $0$ otherwise. Equivalently, one can show that $f_m(\cdot)$ corresponds to a minimizer of a first-order functional Taylor approximation of $R(F_{m-1}(\cdot)+f(\cdot),\theta)$ around $F_{m-1}(\cdot)$ with an $L^2$ penalty on $f(\cdot)$ evaluated at $(X_i)$, $i=1,\dots,n$; see, e.g., \citet{sigrist2018gradient} for more information. It is easily seen that the negative G\^ateaux derivative of $R(F(\cdot),\theta)$ evaluated at $(F_{m-1}(\cdot), I_{X_i}(\cdot))$, $i=1,\dots,n$, is given by
$$-\frac{\partial}{\partial F}L(y,F,\theta)\Big|_{F=F_{m-1}}=\Psi^{-1}(y-F_{m-1}).$$ 
This means that for $\theta$ fixed, gradient boosting finds $f_m(\cdot)$ as the least squares approximation
\begin{equation}\label{grad_boost}
f_m(\cdot)=\argmin_{f(\cdot)\in \mathcal{S}} \left\|\Psi^{-1}(y-F_{m-1})-f\right\|^2,
\end{equation}
where $f=(f(X_1),\dots,f(X_n))^T$. 

In Newton boosting, $f_m(\cdot)$ is found as the minimizer of a second-order functional Taylor approximation to  $R(F_{m-1}(\cdot)+f(\cdot),\theta)$ around $F_{m-1}(\cdot)$. In our case, this gives
\begin{equation}\label{newt_boost}
f_m(\cdot)=\argmin_{f(\cdot)\in \mathcal{S}}\left(y-F_{m-1}-f\right)^T\Psi^{-1}\left(y-F_{m-1}-f\right).
\end{equation}

As mentioned, there also exists a hybrid gradient-Newton boosting version which learns part of the parameters of the base learner $f_m(\cdot)$ using a gradient step and the remaining part using a Newton step. In the following paragraph, we assume that the base learners can be written in the form
$$f(\cdot)=h(\cdot;\alpha)^T\gamma, ~~ h(\cdot;\alpha),\gamma\in \mathbb{R}^K, ~~\alpha\in \mathbb{R}^{Q},$$
where $\alpha$ and $\gamma$ denote parameters of the base learners and $h(\cdot;\alpha):\mathbb{R}^p\rightarrow \mathbb{R}^K$. For instance, for regression trees, $\alpha$ encodes split locations and variables, $\gamma$ contains the terminal node values, and $h(\cdot;\alpha)$ is a function that maps predictor variables to terminal tree nodes. In this case, hybrid gradient-Newton boosting first learns $\hat\alpha$ and $\hat\gamma$ using a gradient boosting step given in \eqref{grad_boost} and sets $\alpha_m=\hat\alpha$, and then updates $\gamma_m=\hat\gamma$ using a Newton boosting step defined in \eqref{newt_boost} with $\alpha_m$ fixed. For the latter Newton step, an explicit generalized least squares solution is obtained as
$$\gamma_m =\left(h_{\alpha_m}^T \Psi^{-1} h_{\alpha_m}\right)^{-1} h_{\alpha_m}^T\Psi^{-1}(y-F_{m-1}),$$
where $h_{\alpha_m}\in\mathbb{R}^{n\times K}$ is the matrix with entries $(h_{\alpha_m})_{ik}=h(X_i;\alpha_m)_k$, $i=1,\dots,n$, $k=1,\dots,K$. 

We note that in Newton boosting as well as hybrid gradient-Newton boosting, the norm of $f_m(\cdot)$, i.e., the step-length, does not depend on the scaling of the loss function, in particular not on $\sigma^2$, because $$\argmin_{f(\cdot)\in \mathcal{S}}\left(y-F_{m-1}-f\right)^T\Psi^{-1}\left(y-F_{m-1}-f\right) = \argmin_{f(\cdot)\in \mathcal{S}}\left(y-F_{m-1}-f\right)^T\left(\Psi^\dagger\right)^{-1}\left(y-F_{m-1}-f\right),$$
and $\Psi^\dagger$, which is defined in Section \ref{model_assum}, does not depend on $\sigma^2$. Since gradients are scale-dependent, this does not hold true for gradient boosting.

It has been empirically observed \citep{friedman2001greedy} that higher prediction accuracy can be obtained by damping the update in \eqref{boostupdate}:
\begin{equation}\label{damping}
F_m(\cdot)= F_{m-1}(\cdot)+ \nu f_m(\cdot),~~\nu>0,
\end{equation}
where $\nu$ is called the shrinkage parameter or learning rate. Further, as in finite-dimensional optimization, functional gradient descent can be accelerated using momentum. For instance, \citet{biau2019accelerated} and \citet{lu2019accelerating} propose to use Nesterov acceleration \citep{nesterov2004introductory} for gradient boosting.

\subsection{Gaussian process boosting}\label{GPB}
A straightforward approach for finding a joint minimizer in \eqref{optim_def} would consist of iteratively first doing one approximate functional gradient or Newton descent step for $F(\cdot)$, and then performing one first- or second-order optimization step for $\theta$. Despite being attractive from a computational point of view, this has the following drawback. For finite samples, boosting tends to overfit, in particular for regression, and early stopping has to be applied as a form of regularization to prevent this. However, there is no guarantee that $\theta$ has converged to a minimum when early stopping is applied after a certain number of iterations. A possible solution to avoid this problem consists of doing coordinate descent, also called block descent, for both $F(\cdot)$ and $\theta$, i.e., iteratively doing full optimization in both directions. But this has the drawback that it is computationally expensive as both $F(\cdot)$ and the covariance parameters $\theta$ need to be repeatedly learned. 

%

Our proposed solution presented in Algorithm \ref{gpboost_algo} is to combine functional gradient or Newton boosting steps for $F(\cdot)$ with coordinate descent steps for $\theta$. Specifically, in every iteration, we first determine $\theta_m=\argmin_{\theta\in\Theta}L(y,F_{m-1},\theta)$ and then update the ensemble of base learners using either a functional gradient descent step, a functional Newton step, or a combination of the two to obtain $F_m(\cdot)= F_{m-1}(\cdot)+ \nu f_m(\cdot)$. We thus avoid the above-mentioned overfitting problem while being computationally more effective compared to doing coordinate descent in both directions. Note that $\Psi_m$ in Algorithm \ref{gpboost_algo} denotes the covariance matrix $\Psi$ for the covariance parameters $\theta_m$ of iteration $m$.
\begin{algorithm}[ht!]
	\SetKwInOut{Input}{Input}
	\SetKwInOut{Output}{Output}
	\Input{Initial value $\theta_0\in\Theta$, learning rate $\nu>0$, number of boosting iterations $M\in\mathbb{N}$, $\texttt{BoostType}\in \{\texttt{"gradient", "newton", "hybrid"}\} $, $\texttt{NesterovAccel}\in\{\texttt{True, False}\}$, and if $\texttt{NesterovAccel==True}$ momentum sequence $\mu_m\in(0,1]$}
	\Output{Predictor function $\hat F(\cdot) = F_{M}(\cdot)$ and covariance parameters $\hat \theta = \theta_M$}
	\caption{GPBoost: Gaussian Process Boosting}\label{gpboost_algo}
	\begin{algorithmic}[1]
		\STATE Initialize $F_0(\cdot)=\argmin_{c\in\mathbb{R}}L(y,c\cdot 1,\theta_0)$
		\FOR{$m=1$ {\bfseries to} $M$}
		\STATE Find $\theta_m=\argmin_{\theta\in\Theta}L(y,F_{m-1},\theta)$ using a method for convex optimization initialized with $\theta_{m-1}$
		\IF{$\texttt{NesterovAccel==True}$}
		\STATE Set $G_{m-1}(\cdot)=F_{m-1}(\cdot)$
		\IF{$m>1$}
		\STATE Update $F_{m-1}(\cdot)=G_{m-1}(\cdot)+\mu_m(G_{m-1}(\cdot)-G_{m-2}(\cdot))$
		\ENDIF
		\ENDIF
		\IF{$\texttt{BoostType=="gradient"}$}
		\STATE Find $f_m(\cdot)=\argmin_{f(\cdot)\in \mathcal{S}}\left\|\Psi_m^{-1}(F_{m-1}-y)-f\right\|^2$
		\ELSIF{$\texttt{BoostType=="newton"}$}
		\STATE Find $f_m(\cdot)=\argmin_{f(\cdot)\in \mathcal{S}}\left(y-F_{m-1}-f\right)^T\Psi_m^{-1}\left(y-F_{m-1}-f\right)$
		\ELSIF{$\texttt{BoostType=="hybrid"}$}
		\STATE Find $\alpha_m=\hat\alpha$, $(\hat\alpha,\hat\gamma)=\argmin_{(\alpha,\gamma):f(\cdot)=h(\cdot;\alpha)^T\gamma \in \mathcal{S}}\left\|\Psi_m^{-1}(F_{m-1}-y)-f\right\|^2$
		\STATE Calculate $\gamma_m =\left(h_{\alpha_m}^T \Psi_m^{-1} h_{\alpha_m}\right)^{-1} h_{\alpha_m}^T\Psi_m^{-1}(y-F_{m-1})$
		\STATE Set $f_m(\cdot)=h(\cdot;\alpha_m)^T\gamma_m$
		\ENDIF
		\STATE Update $F_m(\cdot)= F_{m-1}(\cdot)+ \nu f_m(\cdot)$
		\ENDFOR
	\end{algorithmic}
\end{algorithm}

The coordinate descent step for finding $\theta_m=\argmin_{\theta\in\Theta}L(y,F_{m-1},\theta)$ can be done using a first- or second-order method for convex optimization initialized with the covariance parameters $\theta_{m-1}$ of the previous iteration. In doing so, we avoid the full re-estimation of the covariance parameters in every boosting iteration. Examples of optimization algorithms for determining $\theta_m$ include various forms of gradient descent and quasi-Newton methods such as Fisher scoring which is often called ``natural gradient descent" in machine learning \citep{amari1998natural}. 

For first- and second-order optimization methods, the gradient of $L(y,F,\theta)$ with respect to $\theta$ is required. This gradient is given by
\begin{equation}\label{grad_covpar}
\frac{\partial L(y,F,\theta)}{\partial \theta_k}=
-\frac{1}{2}(y-F)^T{\Psi}^{-1}\frac{\partial\Psi}{\partial \theta_k}{\Psi}^{-1}(y-F)+ \frac{1}{2}\text{tr}\left({\Psi}^{-1}\frac{\partial\Psi}{\partial \theta_k}\right),~~k=1,\dots,q.
\end{equation}
In our software implementation (see Section \ref{software}), we reparametrize all parameters $\theta_k$ with positivity constraints, such as marginal variance and range parameters, on the log-scale $\log(\theta_k)$ in order to constrain them to positive values during the numerical optimization. Further, there is an explicit solution for the error variance parameter: 
\begin{equation}\label{exact_sigma}
\sigma^2=\frac{1}{n}\left(y-F_{m-1}\right)^T{\left(\Psi^\dagger\right)}^{-1}\left(y-F_{m-1}\right).
\end{equation}
For Gaussian processes, we have found in simulated experiments that when doing gradient descent, profiling out the error variance using the analytic formula in \eqref{exact_sigma} increases convergence speed (results not tabulated). For Fisher scoring, on the other hand, profiling out the error variance can reduce convergence speed considerably (results not tabulated).

The Fisher information matrix $I\in \mathbb{R}^{q\times q}$ for Fisher scoring is given by
$$(I)_{kl} = \frac{1}{2}\text{tr}\left({\Psi}^{-1}\frac{\partial\Psi}{\partial \theta_k}{\Psi}^{-1}\frac{\partial\Psi}{\partial \theta_l}\right),~~~~1\leq k,l\leq q.$$
For the linear case, $F(X)=X^T\beta$, asymptotic theory \citep{stein1999} suggests that if the smallest eigenvalue of the Fisher information $I$ tends to infinity as $n\rightarrow \infty$, we can expect that
$$I(\hat\theta)^{1/2}(\hat \theta-\theta_0) \overset{d}{\rightarrow} \mathcal{N}(0, I),$$
where $\theta_0$ denotes the population parameter and $I(\hat\theta)^{1/2}$ is a matrix square root. Based on this, one can construct approximate confidence sets or intervals for $\theta$.

If the risk functional $R(F(\cdot),\theta)=L(y,F,\theta)\Big|_{F=F(X)}$ is convex in its two arguments $F(\cdot)$ and $\theta$ and $\Theta$ is a convex set, then \eqref{optim_def} is a convex optimization problem since $\mathcal{H} = span(\mathcal{S})$ is also convex. Such an algorithm with gradient or Newton steps in $F(\cdot)$ and coordinate descent in $\theta$ thus converges to a minimizer of $R(F(\cdot),\theta)$ as long as the learning rate $\nu$ is not too large to avoid overshooting, i.e., that the risk increases when doing too large steps. 

The computational complexity of the GPBoost algorithm depends on the specific random effects model. For Gaussian processes, computational costs are dominated by the learning of the covariance parameters and the calculation of derivatives with respect to $F$. Specifically, the computational time and space complexity are $O(m^3)$ and $O(m^2)$, respectively, of a single evaluation of the likelihood or its gradient using the Cholesky decomposition when not applying an approximation for large data. Compared to this, the learning of trees is faster in the gradient boosting version and, except for the leaf updates, also in the hybrid gradient-Newton version.

In linear mixed effects models, $L(y,F,\theta)$ is usually optimized by first profiling out the fixed effect part and then optimizing over $\theta$. In our case, this is not an option since there is no explicit solution for $F(\cdot)$ conditional on $\theta$. 
Also, note that restricted maximum likelihood (REML) estimation is often used for linear mixed effects models since otherwise covariance parameter estimates can be biased. This is, however, not applicable to our approach. 

\subsection{Out-of-sample learning for covariance parameters}\label{gpboostoost}
It has recently been observed for both regression and classification that state-of-the-art machine learning techniques such as neural networks, kernel machines, or boosting algorithms can achieve zero training loss and interpolate the training data while at the same time having excellent generalization properties \citep{zhang2016understanding, wyner2017explaining, belkin18a, belkin2019reconciling, bartlett2020benign}. In line with this, we find in our simulated experiments in Section \ref{simul} that estimates of the error variance $\sigma^2$ are often too small also when doing early stopping by monitoring a validation error. 

A way to alleviate this potential variance parameter bias problem is to estimate the covariance parameters using out-of-sample validation data obtained by applying cross-validation or by partitioning the data into two disjoint training and validation sets. To avoid that the predictor function and the covariance parameters $\theta$ are only learned on a fraction of the full data, we propose a two-step approach presented in the GPBoostOOS Algorithm \ref{gpboostoos_algo}. In brief, the GPBoostOOS algorithm first runs the GPBoost algorithm on subsamples of the data and obtains predictions for the predictor function on the left-out validation data. The covariance parameters are then estimated on the out-of-sample data using the predicted predictor function. Finally, the GPBoost algorithm is run a second time on the full data without estimating the covariance parameters. When $k$-fold cross-validation is used in steps 1.-3. of Algorithm \ref{gpboostoos_algo}, both the predictor function and the covariance parameters are learned using the full data.
\begin{algorithm}[ht!]
	\SetKwInOut{Input}{Input}
	\SetKwInOut{Output}{Output}
	\Input{Initial value $\theta_0\in\Theta$, learning rate $\nu>0$, number of boosting iterations $M\in\mathbb{N}$, $\texttt{BoostType}\in \{\texttt{"gradient", "newton", "hybrid"}\} $, $\texttt{NesterovAccel}\in\{\texttt{True, False}\}$, and if $\texttt{NesterovAccel==True}$ momentum sequence $\mu_m\in(0,1]$}
	\Output{Predictor function $\hat F(\cdot)$ and covariance parameters $\hat \theta$}
	\caption{GPBoostOOS: Gaussian Process Boosting with Out-Of-Sample covariance parameter estimation}\label{gpboostoos_algo}
	\begin{algorithmic}[1]
		\STATE Partition the data into training and validation sets, e.g., using $k$-fold cross-validation or by splitting the data into two disjoint sets
		\STATE Run the GPBoost algorithm on the training data and generate predictions for the predictor function on the validation data $\hat F_{val}$
		\STATE Find $\hat \theta=\argmin_{\theta\in\Theta}L(y_{val},\hat F_{val},\theta)$ using the validation data with response $y_{val}$
		\STATE Run the GPBoost algorithm on the full data while holding the covariance parameters $\theta$ fixed at $\hat \theta$, i.e., by skipping line 3 in Algorithm \ref{gpboost_algo}, to obtain $\hat F(\cdot)$
	\end{algorithmic}
\end{algorithm}

\subsection{Computationally efficient learning for large data}\label{largedata}
For computationally efficient learning of trees, several approaches exist so that computations scale well to large data \citep{chen2016xgboost, ke2017lightgbm, CatBoost2017}. In this article, we use the approach of \citet{ke2017lightgbm}. 

Concerning covariance parameters, we adopt the following solutions for reducing computational costs. If the random effects $b$ consist of only grouped random effects, $\Psi$ is usually a sparse matrix, and computations can be done efficiently using sparse matrix algebra. Further, we can use the Sherman-Morrison-Woodbury formula
\begin{equation}\label{woodbury}
\left(Z\Sigma Z^T + \sigma^2 I_{n}\right)^{-1}=\sigma^{-2}I_n-\sigma^{-2}Z\left(\sigma^2{\Sigma}^{-1}+Z^T Z\right)^{-1}Z^T
\end{equation}
for calculating gradients with respect to $F$ and $\theta$ and for evaluating the log-likelihood since the dimension of the random effects $m$ is typically smaller than the number of samples $n$.

If $b$ contains a Gaussian process with a non-sparse covariance matrix, both the computational cost and the required memory do not scale well in the number of observed locations $m$ as standard approaches relying on the Cholesky factorization require $O(m^3)$ calculations and $O(m^2)$ memory storage. In this case, one has to use some approximation to make calculations feasible. We choose to use Vecchia's approximation \citep{vecchia1988estimation, datta2016hierarchical, katzfuss2017general, finley2019efficient}, also denoted as nearest-neighbor Gaussian process (NNGP) model \citep{datta2016hierarchical}, as it is very accurate for spatial data, embarrassingly parallel, and it has the desirable property that maximizing it corresponds to solving a set of unbiased estimating equations \citep{guinness2018permutation}. This has led some authors in the spatial statistics community to declare that ``[a]mong the sea of Gaussian process approximations proposed over the past several decades, Vecchia's approximation has emerged as a leader" \citep{guinness2019gaussian}. However, we note that this is not the only large-data Gaussian process approximation that can be applied to the GPBoost algorithm. Other potential Gaussian process approximations include \citet{snelson2006sparse, quinonero2007approximation, cunningham2008fast, titsias2009variational, hensman2013gaussian, wilson2015kernel, gardner2018gpytorch}; see also the review of \cite{liu2020gaussian}.

Intuitively, the idea of Vecchia's approximation is to approximate a Cholesky factor of the precision matrix using a sparse matrix and thus to obtain a sparse approximate precision matrix. In the following, we briefly review how this is obtained in our case and then show how gradients of the negative log-likelihood given in  \eqref{grad_covpar} can be calculated efficiently. To the best of our knowledge, the latter result is novel.


\subsubsection{Vecchia approximation for the response variable $y$}\label{vecchia_resp}
Vecchia approximations can be seen as a special form of composite likelihood methods \citep{varin2011overview}. In our case, the likelihood $p(y|F,\theta)$ is approximated as
\begin{equation}\label{vecchia}
\begin{split}
p(y|F,\theta)&=\prod_{i=1}^n p(y_{i}|(y_1,\dots,y_{i-1}),F,\theta)\\
&\approx \prod_{i=1}^n p(y_i|y_{N(i)},F,\theta),
\end{split}
\end{equation}
where $y_{N(i)}$ are subsets of the conditioning sets $(y_1,\dots,y_{i-1})$, and $N(i)$ denotes the corresponding subsets of indices. As is commonly done, we choose $N(i)$ as the indices of the $m_v$ nearest neighbors of $s_i$ among $s_1,\dots,s_{i-1}$ if $i>m_v+1$, and, in the case $i\leq m_v+1$, $N(i)$ equals $(1,\dots,i-1)$.

By standard arguments for conditional Gaussian distributions, we have
\begin{equation*}
p(y_i|y_{N(i)},F,\theta)=\mathcal{N}\left(y_i\mid F_i + A_i\left(y_{N(i)}-F_{N(i)}\right),D_i\right),
\end{equation*}
where $\mathcal{N}(x|\mu,\Xi)$ denotes a Gaussian density with mean vector $\mu$ and covariance matrix $\Xi$ evaluated at $x$, and the matrices $A_i\in\mathbb{R}^{1\times|N(i)|}$ and $D_i\in\mathbb{R}$, where $|N(i)|$ denotes the size of the set $N(i)$, are given by
\begin{equation}\label{def_B_D}
\begin{split}
A_i&=\left(Z\Sigma Z^T\right)_{i,N(i)}\left(\left(Z\Sigma Z^T+\sigma^2I_n\right)_{N(i)}\right)^{-1},\\
D_i&=\left(Z\Sigma Z^T+\sigma^2I_n\right)_{i,i}-A_i\left(Z\Sigma Z^T\right)_{N(i),i},
\end{split}
\end{equation}
where $\Sigma=(c(s_l,s_k))_{l,k}$, $1\leq l,k\leq n$ is the covariance matrix of $b$, $c(\cdot,\cdot)$ is the covariance function, $M_{i,N(i)}$ denotes the sub-matrix of a matrix $M$ consisting of row $i$ and columns $N(i)$, and $M_{N(i)}$ denotes the sub-matrix of a matrix $M$ consisting of rows $N(i)$ and columns $N(i)$. Note that if $Z$ is a diagonal matrix, we have
$$\left(Z\Sigma Z^T\right)_{N(i)}=Z_{N(i)}\Sigma_{N(i)}Z_{N(i)}=\Sigma_{N(i)}\odot\left(zz^T\right)_{N(i)},$$
where $z$ is the diagonal of $Z$ and $\odot$ denotes the Hadamard product \citep{dambon2020maximum}. Using this relationship can lead to a reduction in computational cost, in particular for random coefficient models.

We further denote by $B$ the lower triangular matrix with $1$'s on the diagonal, off-diagonal entries 
\begin{equation}\label{def_B}
(B)_{i,N(i)}=-A_i,
\end{equation}
and $0$'s otherwise, and by $D$ a diagonal matrix with $D_i$ on the diagonal. We then obtain the following approximate distribution
\begin{equation}\label{vecchia_approx}
y\overset{approx}{\sim} \mathcal{N}\left(F(X),\tilde {\Psi}\right),~~~~\tilde {\Psi}=B^{-1}DB^{-T},
\end{equation}
and the corresponding precision matrix is given by
\begin{equation}\label{vecchia_approx_prec}
\tilde {\Psi}^{-1}=B^T{D}^{-1}B,
\end{equation}
where $B$ and $\tilde {\Psi}^{-1}$ are sparse. 

Concerning computational complexity, the main burden is the calculation of Cholesky factors of $\text{Cov}(y_{N(i)})=\left(Z\Sigma Z^T+\sigma^2I_n\right)_{N(i)}$. Calculating a Vecchia approximation has $O(nm_v^3)$ computational cost and requires $O(nm_v)$ memory storage. Thus, for given $m_v$, both the computational time and the memory storage grow linearly. Concerning the choice of the numbers of neighbors $m_v$, \citet{datta2016hierarchical} report that  ``usually a small value of [$m_v$] between $10$ and $1$5 produces performance at par with a full geostatistical model''.

\subsubsection{Efficient calculation of the gradient and Fisher information for the Vecchia approximation}
In the following, we show how the gradient and the Fisher information of the approximate log-likelihood of the Vecchia approximation given in \eqref{vecchia_approx} can be calculated efficiently. To the best of our knowledge, the following results are novel. \citet{guinness2019gaussian} also presents a way for computing the gradient and Fisher information for the Vecchia approximation. However, \citet{guinness2019gaussian} uses a different representation of the approximate likelihood by writing conditional densities in \eqref{vecchia} as ratios of joint and marginal densities and, in doing so, obtains a different way for calculating the gradient and Fisher information compared to our result. \citet{guinness2019gaussian} motivates his approach by claiming that for calculating the gradient in \eqref{grad_covpar}, "[n]ot only is $\left[\frac{\partial\Psi}{\partial \theta_k}\right]$ too large to store in memory, the covariances $\left[\Psi\right]$ are not easily computable, nor are their partial derivatives". The following Proposition \ref{GradVecchia} and its proof show that $\frac{\partial\Psi}{\partial \theta_k}$ does not need to be stored in memory, and neither $\Psi$ nor its partial derivatives need to be computed. In the approach of \citet{guinness2019gaussian} for calculating the gradient, the computational complexity is dominated by the need to calculate two Cholesky factorizations of matrices of sizes $\mathbb{R}^{|N(i)|\times |N(i)|}$ and $\mathbb{R}^{(|N(i)|+1)\times (|N(i)|+1)}$ for every data point $i$, where we recall that $|N(i)|$ denotes the number of neighbors of sample $i$. In contrast, in our approach in Proposition \ref{GradVecchia} below, only one matrix of size $\mathbb{R}^{|N(i)|\times |N(i)|}$ needs to be factorized for every sample $i$. This means that our approach for calculating the gradient has approximately only half the computational cost compared to the one of \citet{guinness2019gaussian}.

\begin{proposition}\label{GradVecchia}
	The gradient of the negative log-likelihood $\tilde L(y,F,\theta)$ for the Vecchia approximation given in \eqref{vecchia_approx} can be calculated as
	$$\frac{\partial \tilde L(y,F,\theta)}{\partial \theta_k}=\frac{1}{2\sigma^2} \left(2u_k^Tu - u^T\frac{\partial D}{\partial \theta_k} u\right) +\frac{1}{2}  \sum_{i=1}^n\frac{1}{D_i}\frac{\partial D_i}{\partial \theta_k},~~1\leq k\leq q,$$
	where
	\begin{equation}\label{zzk_V}
	u={D}^{-1}B(y-F) ~~~~\text{and}~~~~ u_k=\frac{\partial B}{\partial \theta_k}(y-F),
	\end{equation}
	and $\frac{\partial B}{\partial \theta_k}$ are lower triangular and $\frac{\partial D}{\partial \theta_k}$ diagonal matrices with non-zero entries given by
	\begin{equation*}
	\begin{split}
	\left(\frac{\partial B}{\partial \theta_k}\right)_{i,N(i)} = & -\frac{\partial A_i}{\partial \theta_k}\\
	= &- \left(Z\frac{\partial \Sigma}{\partial \theta_k} Z^T\right)_{i,N(i)}\left(\left(Z \Sigma Z^T + \sigma^2I_n\right)_{N(i)}\right)^{-1}\\
	&+\left(Z \Sigma Z^T\right)_{i,N(i)}\left(\left(Z \Sigma Z^T + \sigma^2I_n\right)_{N(i)}\right)^{-1} \left(Z \frac{\partial \Sigma}{\partial \theta_k} Z^T\right)_{N(i)} \left(\left(Z \Sigma Z^T + \sigma^2I_n\right)_{N(i)}\right)^{-1},\\
	\frac{\partial D_i}{\partial \theta_k} = & \left(Z\frac{\partial \Sigma}{\partial \theta_k} Z^T\right)_{i,i} - \frac{\partial A_i}{\partial \theta_k}\left(Z^T \Sigma Z\right)_{N(i),i} -A_i\left(Z^T \frac{\partial \Sigma}{\partial \theta_k} Z\right)_{N(i),i},
	\end{split}
	\end{equation*}
	for $1< k\leq q$, and for $k=1$, the non-zero entries of $\frac{\partial B}{\partial \theta_k}$ and $\frac{\partial D}{\partial \theta_k}$ are
	\begin{equation*}
	\begin{split}
	\left(\frac{\partial B}{\partial \sigma^2}\right)_{i,N(i)} =& \left(Z \Sigma Z^T\right)_{i,N(i)}\left(\left(Z \Sigma Z^T + \sigma^2I_n\right)_{N(i)}\right)^{-2},\\
	\frac{\partial D_i}{\partial \sigma^2} =& 1 - \frac{\partial A_i}{\partial \sigma^2}\left(Z^T \Sigma Z\right)_{N(i),i}.
	\end{split}
	\end{equation*}
	
\end{proposition}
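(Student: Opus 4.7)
The plan is to differentiate the approximate negative log-likelihood directly in its factored form, writing the result through a small set of auxiliary vectors so that neither $\tilde\Psi$ nor $\partial \tilde\Psi/\partial \theta_k$ ever needs to be assembled. First, I would rewrite
$$\tilde L(y,F,\theta) = \tfrac{1}{2}(y-F)^T B^T D^{-1} B(y-F) + \tfrac{1}{2}\log\det(\tilde\Psi) + \text{const},$$
using $\tilde\Psi^{-1} = B^T D^{-1} B$ from \eqref{vecchia_approx_prec}. Since $B$ is lower triangular with unit diagonal, $\det(B)=1$, hence $\log\det(\tilde\Psi) = \log\det(D) = \sum_{i=1}^n \log D_i$, and both summands depend on $\theta$ only through $B$ and $D$.

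Next I would differentiate the quadratic form. Setting $w = y-F$, applying the product rule and the identity $\partial D^{-1}/\partial \theta_k = -D^{-1}(\partial D/\partial \theta_k) D^{-1}$ for diagonal $D$, and collapsing the two $\partial B$ contributions (they are scalars equal to their transposes) yields
$$\tfrac{\partial}{\partial \theta_k}\bigl[w^T B^T D^{-1} B w\bigr] = 2\,w^T B^T D^{-1}\tfrac{\partial B}{\partial \theta_k} w \;-\; w^T B^T D^{-1}\tfrac{\partial D}{\partial \theta_k} D^{-1} B w.$$
With $u = D^{-1} B w$ and $u_k = (\partial B/\partial\theta_k)w$ from \eqref{zzk_V}, this collapses to $2\,u_k^T u - u^T (\partial D/\partial \theta_k) u$. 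Differentiating $\sum_i \log D_i$ adds $\sum_i D_i^{-1} \partial D_i/\partial \theta_k$, and combining the two pieces reproduces the stated gradient formula, with the overall $1/\sigma^2$ prefactor arising from the $\sigma^2$-factored parametrization $\Psi^\dagger = \Psi/\sigma^2$ used throughout the paper.

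The remaining task is to compute the nonzero entries of $\partial B/\partial\theta_k$ and $\partial D/\partial\theta_k$ by differentiating the defining formulas \eqref{def_B_D}--\eqref{def_B}. For $k>1$, only the blocks $Z\Sigma Z^T$ depend on $\theta_k$; applying the matrix inverse derivative rule $\partial M^{-1}/\partial\theta_k = -M^{-1}(\partial M/\partial\theta_k)M^{-1}$ to the factor $\bigl((Z\Sigma Z^T + \sigma^2 I_n)_{N(i)}\bigr)^{-1}$ inside $A_i$ produces precisely the three-term expression stated for $\partial A_i/\partial \theta_k$, and substituting this into $D_i = (Z\Sigma Z^T + \sigma^2 I_n)_{i,i} - A_i(Z\Sigma Z^T)_{N(i),i}$ gives $\partial D_i/\partial \theta_k$. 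For the special case $k=1$, the derivative of $\sigma^2 I_n$ inside the inverse is $I_n$, which collapses the middle factor and simplifies $\partial A_i/\partial \sigma^2$ to the single inverse-squared form; differentiating the explicit $\sigma^2 I_n$ outside the inverse yields the ``$+1$'' in $\partial D_i/\partial \sigma^2$. The chief obstacle is the careful bookkeeping for $\partial A_i/\partial \theta_k$: one must verify that the differentiated submatrix inverse produces a row vector supported on $N(i)$ so that $\partial B/\partial\theta_k$ inherits the sparsity pattern of $B$, and that no cross-index terms appear. Once this is checked, all quantities entering the gradient involve only the sparse factors $B$, $D$ and local blocks of $\Sigma$ and $\partial\Sigma/\partial\theta_k$, so $\Psi$ and $\partial\Psi/\partial\theta_k$ are indeed never formed.
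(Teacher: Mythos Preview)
Your proposal is correct and follows essentially the same route as the paper's proof: write $\tilde L$ via $\tilde\Psi^{-1}=B^T D^{-1}B$, differentiate the quadratic form by the product rule to obtain $2u_k^T u - u^T(\partial D/\partial\theta_k)u$, and differentiate $\sum_i\log D_i$ for the log-determinant term. In fact you are more thorough than the paper, which does not spell out why $\log\det\tilde\Psi=\sum_i\log D_i$ and does not derive the explicit entrywise formulas for $\partial B/\partial\theta_k$ and $\partial D_i/\partial\theta_k$ in its proof; your sketch of those derivatives (product rule on $A_i$ together with $\partial M^{-1}=-M^{-1}(\partial M)M^{-1}$, and the $k=1$ specialization) is exactly right, though note the displayed expression for $-\partial A_i/\partial\theta_k$ has two terms, not three.
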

A proof can be found in Appendix \ref{proofs}. As indicated above, the computational costs for calculating the gradient are $O(nm_v^3)$. The Fisher information for the Vecchia approximation in \eqref{vecchia_approx} can be calculated using the following result. 
\begin{proposition}\label{FIVecchia}
	The Fisher information for the Vecchia approximation matrix in \eqref{vecchia_approx} has entries
	\begin{equation}\label{FIVecciaentry}
	(I)_{kl}=\sum_{i,j=1}^n\left({D}^{-1} \frac{\partial B}{\partial \theta_k} B^{-1} \right)_{ij}\left(\frac{\partial B}{\partial \theta_l}B^{-1}D \right)_{ij} + \frac{1}{2}\sum_{i=1}^n {D_i}^{-2}\frac{\partial D_i}{\partial \theta_k}\frac{\partial D_i}{\partial \theta_l},~~1\leq k \leq q,
	\end{equation}
	where $\frac{\partial B}{\partial \theta_k}$ and $\frac{\partial D}{\partial \theta_k}$ are lower triangular and diagonal matrices defined in Proposition \ref{GradVecchia}

\end{proposition}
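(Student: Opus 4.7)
}

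The plan is to start from the standard Gaussian Fisher information identity
$$(I)_{kl} = \tfrac12\,\mathrm{tr}\!\left(\tilde\Psi^{-1}\tfrac{\partial\tilde\Psi}{\partial\theta_k}\tilde\Psi^{-1}\tfrac{\partial\tilde\Psi}{\partial\theta_l}\right),$$
and exploit the factorization $\tilde\Psi=B^{-1}DB^{-T}$, $\tilde\Psi^{-1}=B^TD^{-1}B$ from Section \ref{vecchia_resp}. Since $B$ is lower triangular with unit diagonal, $\partial_k B$ is \emph{strictly} lower triangular; this structural fact will do essentially all of the work.

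First I would differentiate $\tilde\Psi$ via the product rule, using $\partial_k B^{-1}=-B^{-1}(\partial_k B)B^{-1}$, and multiply on the left by $\tilde\Psi^{-1}$. After the simplifications $B^TB^{-T}=I$ and $B^TD^{-1}B\cdot B^{-1}DB^{-T}=I$, I get a clean decomposition
$$\tilde\Psi^{-1}\tfrac{\partial\tilde\Psi}{\partial\theta_k}=-P_k+Q_k-R_k,$$
where $P_k=B^TD^{-1}(\partial_k B)B^{-1}DB^{-T}$, $Q_k=B^TD^{-1}(\partial_k D)B^{-T}$, and $R_k=(\partial_k B^T)B^{-T}$. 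Setting $A_k:=(\partial_k B)B^{-1}$ (strictly lower triangular), cyclic permutation under the trace reduces the six distinct cross-traces in $\mathrm{tr}\!\left((-P_k+Q_k-R_k)(-P_l+Q_l-R_l)\right)$ to simple expressions in $A_k$, $A_l$, and $\partial_k D$, $\partial_l D$.

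Now I would use triangularity to kill most of the nine terms. Concretely: $A_k$ and $A_l$ are strictly lower triangular, so $\mathrm{tr}(A_kA_l)=0$, forcing $\mathrm{tr}(P_kP_l)=0$; by the transpose argument $\mathrm{tr}(R_kR_l)=0$; and every mixed trace involving $Q$ reduces (after cycling out the $B,B^{-T}$ factors) to the trace of a diagonal matrix multiplied by a strictly lower or strictly upper triangular matrix, which vanishes since the diagonal of $A_k$ (and of $A_k^T$) is zero. Thus
$$\mathrm{tr}\!\left(\tilde\Psi^{-1}\tfrac{\partial\tilde\Psi}{\partial\theta_k}\tilde\Psi^{-1}\tfrac{\partial\tilde\Psi}{\partial\theta_l}\right)=2\,\mathrm{tr}(P_kR_l)+\mathrm{tr}(Q_kQ_l).$$
Finally, cycling through gives $\mathrm{tr}(Q_kQ_l)=\sum_iD_i^{-2}\partial_kD_i\,\partial_lD_i$ and $\mathrm{tr}(P_kR_l)=\mathrm{tr}(D^{-1}A_kDA_l^T)=\sum_{i,j}D_i^{-1}D_j(A_k)_{ij}(A_l)_{ij}$, which I would then rewrite as $\sum_{i,j}(D^{-1}\partial_k B\,B^{-1})_{ij}(\partial_l B\,B^{-1}D)_{ij}$. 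Multiplying by $1/2$ yields \eqref{FIVecciaentry}.

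The main obstacle is purely bookkeeping: the nine-term expansion is cumbersome, and one must be careful to track whether each surviving diagonal entry of a product really vanishes. The crucial insight that makes the proof short, rather than a calculation in sparse matrix entries, is recognizing early that $(\partial_k B)B^{-1}$ is strictly lower triangular so that six of the nine cross-traces are zero by inspection.
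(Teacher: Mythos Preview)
Your proposal is correct and follows essentially the same route as the paper: both start from the Gaussian Fisher information trace identity, expand $\tilde\Psi^{-1}\partial_k\tilde\Psi$ into the same three summands, multiply out to nine cross-traces, and then use cyclicality together with the strict lower-triangularity of $\partial_k B$ (equivalently of $(\partial_k B)B^{-1}$) to kill all but the $P_kR_l$, $R_kP_l$, and $Q_kQ_l$ terms. Your naming of $P_k,Q_k,R_k,A_k$ organizes the bookkeeping more transparently than the paper's fully written-out nine-term expansion, but the argument is the same.
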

A proof can be found in Appendix \ref{proofs}. This Fisher information can be used for finding a maximum of the (approximate) likelihood using Fisher scoring. Concerning an approximate covariance matrix for $\hat \theta$ using the asymptotic result mentioned in Section \ref{GPB}, we note that since the Vecchia approximation results in a misspecified model, the Fisher information matrix needs to be replaced by the Godambe information matrix \citep{godambe1960optimum} $G=H I^{-1}H$, where $H$ is the negative expected Hessian of the log-likelihood.

\subsection{Prediction}\label{preds}
Let $y_p\in\mathbb{R}^{n_p}$ denote the random variables for which predictions should be made. We have 
\begin{equation}\label{pred_obs_dis}
\begin{split}
\begin{pmatrix} y \\ y_p\end{pmatrix} &= \begin{pmatrix} F(X) \\ F(X_p)\end{pmatrix} + 
\begin{pmatrix} (Z,0_{n\times m_p}) \\ Z_p \end{pmatrix}\begin{pmatrix} b \\ b_p \end{pmatrix}+\begin{pmatrix} \epsilon \\ \epsilon_p \end{pmatrix},\\
&\sim \mathcal{N} \left(\begin{pmatrix} F(X) \\ F(X_p)\end{pmatrix},
 \begin{pmatrix} Z\Sigma Z^T + \sigma^2 I_{n}& Z(\Sigma,\Sigma_{op})Z_p^T \\ Z_p(\Sigma,\Sigma_{op})^TZ^T &Z_p\begin{pmatrix} \Sigma& \Sigma_{op}\\ \Sigma_{op}^T&\Sigma_p\end{pmatrix}Z_p^T+ \sigma^2 I_{n_p}\end{pmatrix}\right)
\end{split}
\end{equation}
where $b_p\in\mathbb{R}^{m_p}$ is a vector of $m_p$ random effects, for which no data has been observed in $y$, $(Z,0_{n\times m_p})\in \mathbb{R}^{n\times (m + m_p)}$, $0_{n\times m_p}\in\mathbb{R}^{n\times m_p}$ is a matrix of zeros, the matrix $Z_p\in \mathbb{R}^{n_p\times (m + m_p)}$ relates the vector of observed and new random effects $(b^T,b_p^T)^T\in\mathbb{R}^{m + m_p}$ to $y_p$, $(\Sigma,\Sigma_{op})\in\mathbb{R}^{m\times (m + m_p)}$, $\Sigma_{op} = \text{Cov}(b,b_p)$, $\Sigma_{p} = \text{Cov}(b_p)$, and $X_p\in\mathbb{R}^{n_p\times p}$ is the predictor variable matrix of the predictions. 

Note that $y_p$ can be related to both existing random effects $b$, for which data $y$ has been observed, and also new, unobserved random effects $b_p$. This distinction is particularly relevant for grouped random effects, where predictions can be made for samples of groups for which data has already been observed in $y$, or also for new groups for which no data has been observed in $y$. Further, this can also be useful for Gaussian process models for distinguishing when predictions should be made for $y_p$, or $b_p$, at new locations, or when $b$ should be predicted at observed locations. 

From \eqref{pred_obs_dis}, it follows that the conditional distribution $y_p|y$ is given by 
\[y_p|y\sim \mathcal{N} \left( \mu_p , \Xi_p\right),\]
where
\begin{equation}\label{predeq}
\begin{split}
\mu_p=& F(X_p)+  Z_p(\Sigma,\Sigma_{op})^TZ^T \left(Z\Sigma Z^T + \sigma^2I_{n}\right)^{-1}\left(y-F(X)\right)\\
\Xi_p=&  Z_p\begin{pmatrix} \Sigma& \Sigma_{op}\\ \Sigma_{op}^T&\Sigma_p\end{pmatrix}Z_p^T+ \sigma^2 I_{n_p} - Z_p(\Sigma,\Sigma_{op})^TZ^T \left(Z\Sigma Z^T + \sigma^2I_{n}\right)^{-1}  Z(\Sigma,\Sigma_{op})Z_p^T.
\end{split}
\end{equation}

Depending on the application, if $n\gg m$, the above quantities can be more efficiently calculated using the Sherman-Morrison-Woodbury formula given in \eqref{woodbury}. Further, predictions for the latent $b$, $b_p$, or $F(X_p)+Z_p\begin{pmatrix} b \\ b_p \end{pmatrix}$ can be done analogously with minor modifications, e.g., dropping the error variance term $\sigma^2I_{n_p}$ from the covariance matrix in \eqref{predeq}.

\subsubsection{Prediction using the Vecchia approximation}\label{largedata2}
Similarly as for parameter estimation, Vecchia approximations can also be used for making predictions. Specifically, predictions can be obtained by applying a Vecchia approximation to the joint response vector of observed and prediction locations. When doing so, one has to choose an ordering among the joint set of observed and predicted locations. We assume that either the observed or the prediction locations appear first in the ordering of the response variable. The former has the advantage that the nearest neighbors found for estimation can be reused and that the predictive distributions have the simple form given below in \eqref{predVechOF}. On the other hand, if prediction locations appear first in the ordering, the approximations of predictive distributions are generally more accurate. See \citet{katzfuss2018vecchia} for a comparison of different approaches for making predictions with Vecchia approximations.

\begin{proposition}\label{PredVEcchiaOF}
	Assume that prediction are made at $n_p$ locations $s_{p,1},\dots,s_{p,n_p}$ with predictor variable data $X_p$. When applying the Vecchia approximation in \eqref{vecchia_approx} to the response vector $(y,y_p)^T$ with the observed response $y$ appearing first in the ordering, the conditional distribution $y_p|y$ is given by 
	\[y_p|y\sim \mathcal{N} \left( \mu_p , \Xi_p\right),\]
	where
	\begin{equation}\label{predVechOF}
	\begin{split}
	\mu_p=& F(X_p)-B_p^{-1}B_{po}\left(y-F(X)\right),\\
	\Xi_p=& B_p^{-1}{D_p}B_p^{-T},
	\end{split}
	\end{equation}
	and $B_{po}\in\mathbb{R}^{n_p\times n}$, $B_p\in\mathbb{R}^{n_p\times n_p}$, ${D_p}^{-1}\in\mathbb{R}^{n_p\times n_p}$ are the following submatrices of the Vecchia approximated precision matrix $\tilde{Cov}\left((y,y_p)^T\right)^{-1}$:
	\begin{equation}\label{prec_pred}
	\tilde{Cov}\left((y,y_p)^T\right)^{-1}
	=\begin{pmatrix} B & 0 \\ B_{po}&B_p\end{pmatrix}^T
	\begin{pmatrix} {D}^{-1}& 0 \\ 0&{D_p}^{-1}\end{pmatrix}
	\begin{pmatrix} B & 0 \\ B_{po}&B_p\end{pmatrix},
	\end{equation}
	and $B$ and $D$ are defined in \eqref{def_B_D} and \eqref{def_B}.
\end{proposition}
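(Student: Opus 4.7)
The plan is to derive the conditional distribution $y_p \mid y$ directly from the sparse Cholesky factorization of the joint Vecchia-approximated precision matrix given in \eqref{prec_pred}, using the standard formulas that express the conditional mean and covariance of a block-partitioned Gaussian vector in terms of its precision (rather than its covariance). This avoids having to invert the full approximate joint covariance and instead exploits the fact that $B$, $B_{po}$, $B_p$ and $D$, $D_p$ are immediately available from the definition of the Vecchia approximation applied to the ordering $(y,y_p)^T$ with $y$ first.

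First I would verify the block-precision expression. Because $y$ appears first in the ordering, each $y_p$-component conditions only on its nearest neighbors, which can be either observed locations (contributing to $B_{po}$) or earlier prediction locations (contributing to $B_p$), while the rows of the joint Cholesky factor corresponding to $y$ are unaffected and equal $B$ with $D$ on the diagonal. Multiplying out
\begin{equation*}
\begin{pmatrix} B & 0 \\ B_{po} & B_p\end{pmatrix}^T
\begin{pmatrix} D^{-1}& 0 \\ 0 & D_p^{-1}\end{pmatrix}
\begin{pmatrix} B & 0 \\ B_{po} & B_p\end{pmatrix}
=
\begin{pmatrix} B^T D^{-1} B + B_{po}^T D_p^{-1} B_{po} & B_{po}^T D_p^{-1} B_p \\ B_p^T D_p^{-1} B_{po} & B_p^T D_p^{-1} B_p \end{pmatrix}
\end{equation*}
identifies the blocks $Q_{22} = B_p^T D_p^{-1} B_p$ and $Q_{21} = B_p^T D_p^{-1} B_{po}$ of the joint approximate precision.

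Next I would invoke the well-known result that for a Gaussian vector with mean $(\mu_1,\mu_2)^T$ and block precision $Q = \begin{pmatrix} Q_{11} & Q_{12} \\ Q_{21} & Q_{22}\end{pmatrix}$, the conditional distribution of the second block given the first is Gaussian with covariance $Q_{22}^{-1}$ and mean $\mu_2 - Q_{22}^{-1} Q_{21}(y - \mu_1)$. Substituting the marginal means $F(X)$ and $F(X_p)$ from the Vecchia approximation and the blocks computed above gives
\begin{equation*}
\Xi_p = \left(B_p^T D_p^{-1} B_p\right)^{-1} = B_p^{-1} D_p B_p^{-T},
\end{equation*}
and, after cancelling $B_p^{-T} B_p^T$ and $D_p D_p^{-1}$,
\begin{equation*}
\mu_p = F(X_p) - B_p^{-1} D_p B_p^{-T} \cdot B_p^T D_p^{-1} B_{po}\,(y - F(X)) = F(X_p) - B_p^{-1} B_{po}\,(y - F(X)),
\end{equation*}
which is exactly \eqref{predVechOF}.

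The only real subtlety is justifying the claimed block structure of the joint approximate precision; once that is in place, the rest is a one-line application of the Gaussian conditioning identity and some algebra. The argument for the block structure is essentially that with the observed locations ordered first, the Vecchia factorization for the joint vector inherits the factors $B$ and $D$ computed for $y$ alone, and appends new rows indexed by prediction locations whose nonzero entries split naturally into a part acting on observed components ($B_{po}$) and a strictly lower-triangular part acting on earlier prediction components ($B_p$, with unit diagonal). I would state this explicitly and then conclude.
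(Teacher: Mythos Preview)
Your proposal is correct and follows essentially the same route as the paper: expand the block factorization of the joint Vecchia precision, then apply the precision-parametrized Gaussian conditioning identity (which the paper cites as Theorem~12.2 in Rue and Held rather than stating explicitly) and simplify $(B_p^T D_p^{-1} B_p)^{-1} B_p^T D_p^{-1} B_{po}$ to $B_p^{-1} B_{po}$. Your added justification of why the joint Cholesky factor has the claimed block-lower-triangular form is a welcome clarification that the paper leaves implicit in the proposition statement.
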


A proof can be found in Appendix \ref{proofs}. Note that $D_p$ is a diagonal matrix, $B_p$ is a lower triangular matrix with $1$'s on the diagonal and non-zero off-diagonal entries corresponding to the nearest neighbors of the prediction locations among the prediction locations themselves $s_{p,1},\dots,s_{p,n_p}$, and $B_{po}$ has non-zero entries corresponding to the nearest neighbors of the prediction locations among the observed locations $s_1,\dots,s_{n}$. 

If only univariate predictive distributions are of interest, computational costs can be additionally reduced by restricting that one conditions on observed locations only in \eqref{vecchia}. The latter means that for every prediction location $s_{p,i}$, one conditions only on observed data $y_{N(i)}$ where $N(i)$ denotes the set of nearest neighbors for location $s_{p,i}$. In this case, $B_p$ is an identity matrix and the predictive covariance matrix $\Xi_p$ is a diagonal matrix. The latter can be a drawback if multivariate predictive distributions are required.

When prediction locations appear first in the ordering of the response variable, predictions can be obtained as follows.
\begin{proposition}\label{PredVEcchiaPF}
	Assume that prediction are made at $n_p$ locations $s_{p,1},\dots,s_{p,n_p}$ with predictor variable data $X_p$. When applying the Vecchia approximation in \eqref{vecchia_approx} to the response vector $(y_p,y)^T$ with the predicted response $y_p$ appearing first in the ordering, the conditional distribution $y_p|y$ is given by 
	\[y_p|y\sim \mathcal{N} \left( \mu_p , \Xi_p\right),\]
	with
	\begin{equation}\label{predVechLF}
	\begin{split}
	\mu_p=& F(X_p)- \left(B_p^T{D_p}^{-1}B_p + B_{op}^T{D_o}^{-1}B_{op}\right)^{-1}B_{op}^T{D_o}^{-1}B_o\left(y-F(X)\right),\\
	\Xi_p=& \left(B_p^T{D_p}^{-1}B_p + B_{op}^T{D_o}^{-1}B_{op}\right)^{-1},
	\end{split}
	\end{equation}
	where $B_{o},D_o\in\mathbb{R}^{n\times n}$, $B_{op}\in\mathbb{R}^{n\times n_p}$, $B_p,D_p\in\mathbb{R}^{n_p\times n_p}$, ${D_p}^{-1}\in\mathbb{R}^{n_p\times n_p}$ are the following submatrices of the Vecchia approximated precision matrix $\tilde{Cov}\left((y_p,y)^T\right)^{-1}$:
	$$
	\tilde{Cov}\left((y_p,y)^T\right)^{-1}=    \begin{pmatrix} B_p & 0 \\ B_{op}&B_o\end{pmatrix}^T
	\begin{pmatrix} {D_p}^{-1} & 0 \\0&{D_o}^{-1}\end{pmatrix} 
	\begin{pmatrix} B_p & 0 \\ B_{op}&B_o\end{pmatrix}.
	$$
\end{proposition}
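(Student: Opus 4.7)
The plan is to derive the conditional distribution $y_p \mid y$ directly from the block form of the Vecchia-approximated precision matrix, using the fact that for a jointly Gaussian vector, conditional distributions have a particularly simple expression in terms of the precision (rather than the covariance) matrix.

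First, I would write out the approximate joint distribution of $(y_p,y)^T$ implied by the Vecchia approximation of Section \ref{vecchia_resp} applied in the chosen ordering, namely
$(y_p,y)^T \overset{approx}{\sim} \mathcal{N}\!\left((F(X_p),F(X))^T,\,\tilde\Psi_{\mathrm{joint}}\right)$,
where the precision matrix admits the factorization stated in the proposition. Expanding the matrix product on the right-hand side, the blocks of $\tilde\Psi_{\mathrm{joint}}^{-1}$ are
\begin{equation*}
Q_{pp}=B_p^T{D_p}^{-1}B_p + B_{op}^T{D_o}^{-1}B_{op},\quad Q_{po}=B_{op}^T{D_o}^{-1}B_o,\quad Q_{oo}=B_o^T{D_o}^{-1}B_o,
\end{equation*}
with $Q_{op}=Q_{po}^T$. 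This is the key identity and follows from a straightforward block multiplication of the sparse triangular factors.

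Next, I would invoke the standard Gaussian conditioning identity expressed via the precision matrix: if $(u,v)$ is jointly Gaussian with mean $(\mu_u,\mu_v)$ and precision partitioned as $\bigl(\begin{smallmatrix}Q_{uu}&Q_{uv}\\Q_{vu}&Q_{vv}\end{smallmatrix}\bigr)$, then $u\mid v \sim \mathcal{N}(\mu_u - Q_{uu}^{-1}Q_{uv}(v-\mu_v),\,Q_{uu}^{-1})$. Applying this with $u=y_p$ and $v=y$, and substituting the block expressions derived above, immediately yields
$\Xi_p = Q_{pp}^{-1} = \left(B_p^T{D_p}^{-1}B_p + B_{op}^T{D_o}^{-1}B_{op}\right)^{-1}$
and
$\mu_p = F(X_p) - Q_{pp}^{-1}Q_{po}(y-F(X))$,
which matches \eqref{predVechLF}.

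There is no real obstacle here; the argument is essentially a one-line application of Gaussian conditioning to the precision matrix that the Vecchia approximation hands us in already-factored form. The only thing that requires any care is verifying the block multiplication that identifies $Q_{pp}$, $Q_{po}$, and $Q_{oo}$, and checking that the zero upper-right block of the triangular factor $\bigl(\begin{smallmatrix}B_p&0\\B_{op}&B_o\end{smallmatrix}\bigr)$ correctly propagates through the product — but both are immediate from the sparsity pattern enforced by the Vecchia ordering in which $y_p$ precedes $y$, so that conditioning sets for $y_p$-entries never reference later entries of $y$.
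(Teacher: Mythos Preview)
Your proposal is correct and mirrors the paper's own proof: the paper also expands the block product to identify the precision blocks $Q_{pp}$, $Q_{po}$, $Q_{oo}$ and then applies the precision-matrix form of Gaussian conditioning (citing Theorem~12.2 of Rue and Held rather than stating it explicitly). There is no substantive difference between the two arguments.
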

A proof can be found in Appendix \ref{proofs}. 

\subsection{Software implementation}\label{software}
The GPBoost algorithm is implemented in the \texttt{GPBoost} library written in C++ with a C application programming interface (API) and corresponding Python and R packages. See \url{https://github.com/fabsig/GPBoost} for more information. For linear algebra calculations, we rely on the \texttt{Eigen} library \citep{eigenweb}. Sparse matrix algebra is used, in particular for calculating Cholesky decompositions, whenever covariance matrices are sparse, e.g., in the case of grouped random effects. In addition, to speed up computations for solving sparse linear triangular equation systems where the right-hand side is also sparse, we use the function \texttt{cs\_spsolve} from the \texttt{CSparse} library \citep{davis2005csparse} where the non-zero entries of the solutions are determined using a depth-first search algorithm. Further, multi-processor parallelization is done using \texttt{OpenMP}. For the tree-boosting part, in particular the tree growing algorithm, we use the \texttt{LightGBM} library \citep{ke2017lightgbm}.  Note that the \texttt{GPBoost} library allows for modeling Gaussian processes, grouped random effects including hierarchically nested and crossed ones, random coefficients, and combinations of the former.

\section{Simulated Experiments}\label{simul}
In the following, we use simulation to investigate the prediction accuracy as well as the properties of the covariances parameter and predictor function estimates of the GPBoost algorithm.

\subsection{Simulation setting}\label{sim_set}
We simulate data from the model given in \eqref{modunit}. For the random effects part $Zb$, we use both a grouped random effects model and a spatial Gaussian process model. The sample size is $n=5000$ for the grouped data and $n=500$ for the spatial data. The reason for using a smaller sample size for the spatial data is that this allows us to do all calculations exactly without relying on an approximation for large data. In Section \ref{estimation}, we show how learning can be done for large data, and we use this in the application in Section \ref{data_appl}.

For simulating grouped data, we use a single-level grouped random effects model with $10$ samples per group, i.e., $m=500$ different groups. In other words, there is a single high-cardinality categorical variable with $500$ different categories. For simulating spatial data, we use a spatial Gaussian process model with an exponential covariance function 
\begin{equation*}\label{exp_cov}
c(s,s')=\sigma_1^2 \exp(-\|s-s'\|/\rho),
\end{equation*}
where the locations $s$ are in $[0,1]^2$ and $\rho=0.1$. The marginal variance in both models is set to $\sigma^2_1=1$, and the error variance equals $\sigma^2=1$ such that the signal-to-noise ratio between the random effects $Zb$ and the error term $\epsilon$ is $1$. 

Concerning the fixed effects predictor function $F(\cdot)$ and the predictor variables $X$, we use the following different specifications:
\begin{alignat*}{3}
F(x)&=C\cdot(2x_1+x_2^2+4\cdot \mathbbm{1}_{\{x_3>0\}}+2\log(|x_1|)x_3),~~x=(x_1,\dots,x_9)^T,~~x\sim \mathcal{N} ( 0 , I_9), &(\text{`hajjem'})\\
F(x)&=C\cdot \tan^{-1}\left(\frac{x_2x_3-1-\frac{1}{x_2x_4}}{x_1}\right), ~~x=(x_1,x_2,x_3,x_4)^T, &(\text{`friedman3'}) \\ 
& ~~~~~~~~ x_1\sim Unif(0,100), x_2\sim Unif(40\pi,560\pi), x_3\sim Unif(0,1), x_4\sim Unif(1,11),\\
F(x)&=C\cdot(1+x_1+x_2), ~~x=(x_1,x_2)^T,~~ x_1,x_2\overset{iid}{\sim} Unif(0,1). &(\text{`linear'})
\end{alignat*}
The function `hajjem' has been used in \citet{hajjem2014mixed} to compare non-parametric mixed effects models, and the function `friedman3' was first used in \citet{friedman1991multivariate} and has since then often been used to compare non-parametric regression models. We also include a linear function to investigate how our approach compares to a linear mixed effects model when the data generating process is linear. The constant $C$ is chosen such that the variance of $F(x)$ equals approximately $1$, i.e., that $F(x)$ has the same signal strength as the random effects part. Predictor variable data is simulated independently from the random effects.

We simulate $100$ times both a training data set of size $n$ and two different test data sets each also of size $n$. Learning and selection of tuning parameters are done on the training data, and evaluation is done on the test data for all models considered. The two test data sets, denoted briefly as ``interpolation" and ``extrapolation" test sets, are generated as follows in every simulation run. For the grouped data, the ``interpolation" test data set consists of random effects for the same groups as in the training data, and the ``extrapolation" test data contains $m$ independent random effects for new groups that have not been observed in the training data. For the spatial data, the training data locations are sampled uniformly from $[0,1]^2$ excluding $[0.5,1]^2$, the ``interpolation" test data set is obtained by also simulating locations uniformly in the same area, and the ``extrapolation" test data contains locations sampled uniformly from $[0.5,1]^2$. Predictions for the ``extrapolation" test data are thus to some degree extrapolations. Figure \ref{Train_test_locs} illustrates this. 
\begin{figure*}[ht!]
	\centering
	\includegraphics[width=0.65\textwidth]{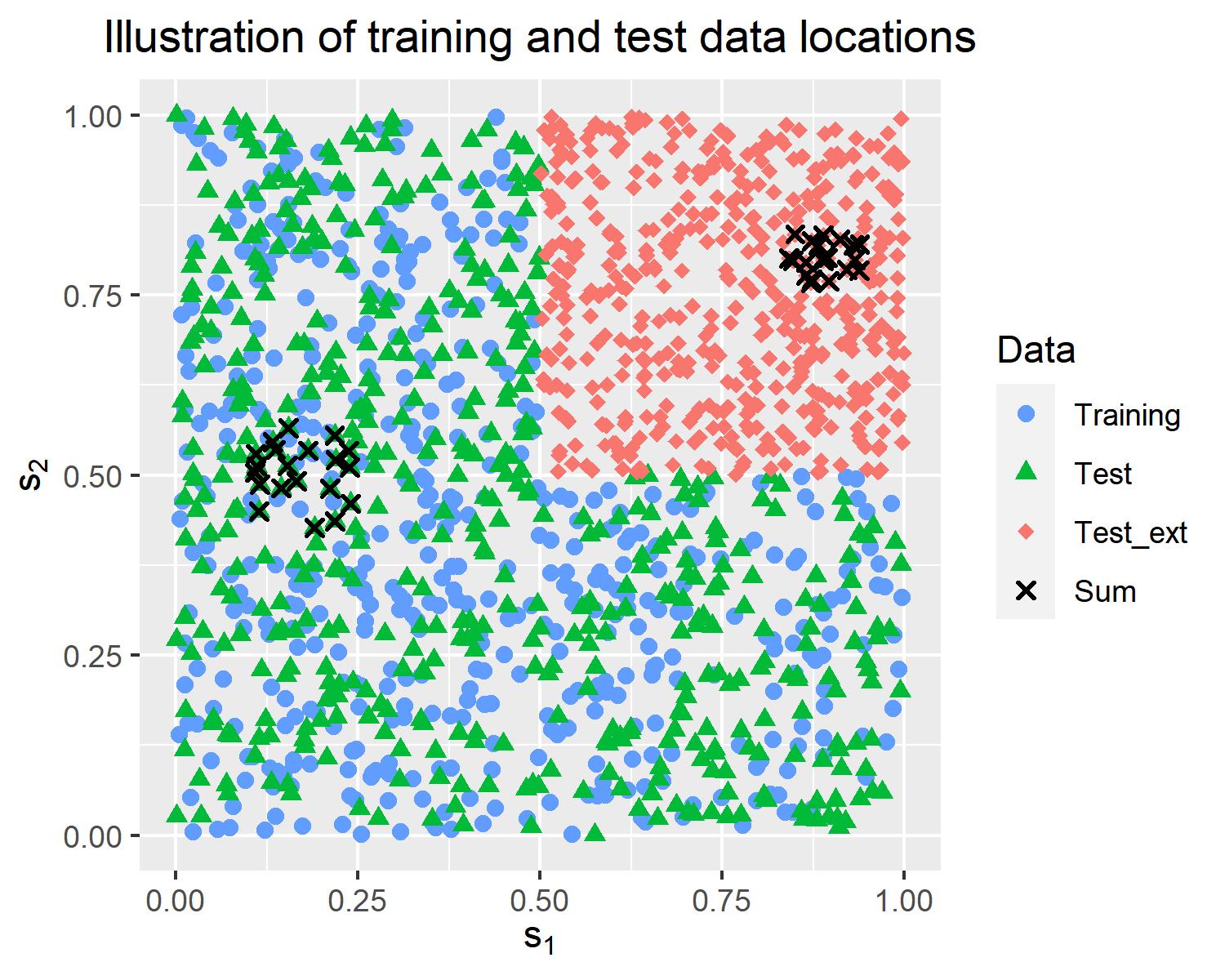}
	\caption{Example of locations for training and test data for the spatial data. ``Test" and ``Test\_ext" refers to locations of the ``interpolation" and ``extrapolation" test data sets, respectively. The black crosses show examples of locations for which predictions of sums are made.}
	\label{Train_test_locs} 
\end{figure*}

\subsection{Methods considered}\label{methods}
We compare the GPBoost algorithm to the following alternative approaches: linear grouped mixed effects models (`LinearME') and linear Gaussian process models (`LinearGP') with $F(X)=X^T\beta$ and an exponential covariance function, mixed-effects random forest (`MERF') \citep{hajjem2014mixed}, RE-EM trees (`REEMtree') \citep{sela2012re}, model-based boosting (`mboost') \citep{hothorn2010model}, independent gradient boosting with a square loss (`LSBoost'), and the boosting approach for categorical predictor variables of \citet{CatBoost2017} (`CatBoost'). Besides the GPBoost algorithm, we apply the GPBoostOOS algorithm where covariance parameters are estimated using $4$-fold cross-validation. For the latter cross-validation, sampling of data indices is done independently of the grouping variable and the spatial coordinates. For the spatial data, we additionally consider a two-step approach (`TwoStep') where $F(\cdot)$ and $\theta$ in \eqref{modunit} are estimated separately in an iterative manner instead of jointly as in the GPBoost algorithm. Specifically, we first apply independent gradient boosting using the predictor variables $X$ and then fit a zero-mean spatial Gaussian process to the residuals of the first step. For all boosting algorithms, we use gradient boosting without Nesterov acceleration and trees as base learners, except for the grouped and spatial random effects in mboost where Ridge regression and splines are used. In independent gradient boosting with a square loss, the spatial locations and the categorical grouping variable are included as additional predictor variables in the predictor function $F(\cdot)$. In doing so, we consider the grouping variable as a categorical variable and use the approach of \texttt{LightGBM} to handle categorical variables. Considering the grouping variable as a continuous variable or using dummy variables leads to worse results (results not tabulated). The MERF, REEMtree, and CatBoost algorithms are only used for the grouped data and not for the spatial data.

Learning and prediction with the GPBoost and GPBoostOOS algorithms, the linear grouped random effects and Gaussian process models, and independent boosting with a square loss (LSBoost) is done using the \texttt{GPBoost} library version 0.7.8 compiled with the MSVC compiler version 19.24.28315.0 and OpenMP version 2.0.\footnote{When using another compiler such as a GNU compiler which supports a higher version of OpenMP, computations for the random effects part can become considerably faster (e.g., approximately twice as fast for the Gaussian process model when using OpenMP 4.5). However, the tree-boosting part of \texttt{LightGBM} is slower when using a GNU compiler.} For the linear mixed effects and Gaussian process models, the GPBoost algorithm, and the GPBoostOOS algorithm, optima for covariance parameters $\theta$ are found using Nesterov accelerated gradient descent \citep{nesterov2004introductory}. For the mboost algorithm, we use the \texttt{mboost} R package \citep{mboost14} version 2.9-2 and model spatial effects using bivariate P-spline base learner (\texttt{bspatial} with \texttt{df=6}) and grouped random effects using random effects base learners (\texttt{brandom} with \texttt{df=4}). All other predictor variables are modeled using trees as base learners. For the MERF algorithm, we use the \texttt{merf} Python package version 0.3\footnote{We have also tried the \texttt{merf} Python package version 1.0. While the results are overall very similar, the estimates of the error variance $\sigma^2$ are highly biased when using version 1.0. For this reason, we use version 0.3.}. The number of iterations of the MERF algorithm is set to $100$. Increasing this value does not change our findings (results not tabulated). However, we note that we often do not observe convergence of the MERF algorithm, no matter how long we let it run.\footnote{This is an observation that has also been made by the creator of the \texttt{MERF} package in a blog post; see https://towardsdatascience.com/mixed-effects-random-forests-6ecbb85cb177 (retrieved on August 22, 2022).} For the REEMtree algorithm, we use the \texttt{REEMtree} R package version 0.90.3. For the CatBoost algorithm, we use the \texttt{CatBoost} library version 1.0.6. All calculations are done on a laptop with a $2.9$ GHz quad-core processor and $16$ GB of random-access memory (RAM).

\subsection{Evaluation criteria}\label{eval_crit}
We measure prediction accuracy using the the root mean square error (RMSE) $$\sqrt{\frac{1}{n_p}\sum_{i=1}^{n_p} (y_{p,i}-\mu_{p,i})^2},$$
where $y_{p,i}$ are test response variables, are $\mu_{p,i}$ predictive means, and $n_p$ is the number of test samples. In addition, we analyze the accuracy of probabilistic predictions for the spatial data. As is commonly done in spatial statistics, probabilistic predictions are evaluated using the continuous ranked probability score (CRPS) \citep{gneiting2007strictly} given by
$$\int \left(F_{p,i}(y) - \mathbbm{1}_{\{y_{p,i} \leq y\}} \right)^2dy $$
for one sample $i$, where $F_{p,i}$ is the predictive cumulative distribution function (CDF). Intuitively, the CRPS corresponds to the mean square error between the predictive CDF $F_{p,i}$ and the empirical CDF $\mathbbm{1}_{\{y_{p,i} \leq y\}}$. The CRPS is a proper scoring rule \citep{gneiting2007strictly} that measures both calibration and sharpness of predictive distributions; see \citet{gneiting2007probabilistic} for more information. In our case, the predictive distributions are Gaussian distributions, and the average CRPS can be calculated explicitly as 
$$\frac{1}{n_p}\sum_{i=1}^{n_p} \sigma_{p,i}\left(\frac{-1}{\sqrt{\pi}}+2\phi\left(\frac{y_{p,i}-\mu_{p,i}}{\sigma_{p,i}}\right)+\frac{y_{p,i}-\mu_{p,i}}{\sigma_{p,i}}\left(2\Phi\left(\frac{y_{p,i}-\mu_{p,i}}{\sigma_{p,i}}\right)-1\right)\right),$$
where $\mu_{p,i}$ and $\sigma_{p,i}^2$ are predictive means and variances, and $\phi$ and $\Phi$ denote the probability density function and CDF of a standard Gaussian variable. Predictive variances $\sigma_{p,i}^2$ are obtained as described in Section \ref{preds} for the Gaussian process-based models. For the deterministic independent boosting approaches (mboost and LSBoost), we use sample variances of the residuals on the training data as predictive variances.   

For the spatial data, we additionally evaluate the accuracy when predicting sums of $n'=20$ observations $y_p=(y_{p,1},\dots,y_{p,n'})^T$. For instance, such predictions are required in meteorology for predicting the total precipitation over a catchment area or in real estate for predicting the total value of a portfolio of objects. Note that if a multivariate predictive distribution for $y_p$ is given by $y_p|y\sim \mathcal{N} \left( \mu_p , \Xi_p\right)$, the predictive distribution of the sum is obtained as 
$$\mathbf{1}^T y_p\big|y\sim \mathcal{N} \left( \mathbf{1}^T\mu_{p} , \mathbf{1}^T\Xi_p\mathbf{1}\right),$$
where $\mathbf{1}$ is a vector of ones $\mathbf{1}=(1,\dots,1)^T$. In total, we predict $50$ times sums of different samples $y_p\in \mathbb{R}^{20}$ in every simulation run. Half of these $50$ samples for predicting sums are from the ``interpolation" and the ``extrapolation" test data each. Specifically, for both the ``interpolation" and the ``extrapolation" test data, we randomly select $25$ times disjoint sets of $20$ observations that are close together in space. We obtain these sets of $20$ close-by observations by randomly selecting a location and determining its $19$ nearest neighbors and then iteratively continuing in the same manner with the remaining locations until there are $25$ disjoints set for both the ``interpolation" and the ``extrapolation" test data. This is illustrated in Figure \ref{Train_test_locs}.

Further, we also measure the accuracy to learn the predictor function $F(\cdot)$ and predict the random effects $b$. This is done by evaluating predictions for $F(\cdot)$ and $b$ on the ``interpolation" test data sets. In addition, we evaluate the accuracy of estimates for the variance and covariance parameters $\theta$. For the independent deterministic boosting approaches which model spatial or grouped random effects using deterministic base learners (LSBoost, mboost, and CatBoost), no estimates for covariance parameters, the predictor function $F(\cdot)$, and the random effects $b$ can be obtained. The exception are grouped random effects where \texttt{mboost} reports estimated random effects $b$. This then also allows for obtaining an estimate for $F(\cdot)$ by assigning the test data a certain group from the training data and subtracting the value of the estimated random effect for this group from the obtained predictions. In the same way, we also obtain response variable predictions for data with new groups for mboost.

\subsection{Choice of tuning parameters}\label{tune_pars}
Tuning parameters are chosen using $4$-fold cross-validation on the training data in every simulation run by selecting the combination of tuning parameters from a full grid that minimizes the average mean square error on the validation data. For every boosting algorithm (LSBoost, mboost, CatBoost, GPBoost, and GPBoostOOS), we consider the following candidate tuning parameters: the number of boosting iterations $M\in \{1,\dots,1000\}$, the learning rate $\nu \in \{0.1,0.05,0.01\}$, the maximal tree depth $\in \{1,5,10\}$, and the minimal number of samples per leaf $\in \{1,10,100\}$. For the MERF algorithm, we choose the proportion of variables considered for making splits $\in \{0.5, 0.75, 1\}$. As in \citet{hajjem2014mixed}, we do not impose a maximal tree depth limit and set the number of trees to $300$. Note that the MERF algorithm implemented in the \texttt{MERF} package is very slow (see the results below) and choosing tuning parameters is thus computationally demanding. Generally, for random forests, the choice of tuning parameters is less important compared to boosting. For the \texttt{REEMtree} package, which relies on the \texttt{rpart} R package, trees are cost-complexity pruned and the amount of pruning is chosen using $10$-fold cross-validation on the training data.

\subsection{Results}\label{res_sim}
The results for the `hajjem' predictor function are reported in Table \ref{results_hajjem_one_way} for the grouped data and in Table \ref{results_hajjem_spatial} for the spatial data. The results for the other two predictor functions are reported in Appendix \ref{sim_res_oth}. In the tables, we report average values of the prediction accuracy measures over the simulation runs as well as corresponding standard errors. Further, we calculate p-values of paired two-sided t-tests comparing the GPBoost algorithm to the other approaches. Note that sometimes differences in accuracy measures are relatively small compared to standard errors, but p-values are highly significant nonetheless. The reason for this is that standard errors do not reflect correlation among methods over different simulation runs but paired t-tests do.\footnote{An example of this is the `RMSE\_new' in Table \ref{results_hajjem_one_way} when comparing GPBoost and CatBoost. The difference in the average `RMSE\_new' is relatively small compared to the corresponding standard errors, but the p-value is very small. In this case, GPBoost has a lower `RMSE\_new' than CatBoost in approximately $80$ of $100$ simulation runs, and the correlation between `RMSE\_new' of the two methods is approximately $0.9$ (results not tabulated).} For the covariance parameters, we report the RMSE and bias over the simulation runs. We also report the wall-clock time in seconds for training the different models. Note that the wall-clock times reported depend on the chosen tuning parameters. In particular, lower learning rates $\nu$ usually imply higher computational times for all boosting algorithms including the GPBoost algorithm.

\begin{table}[ht!]
\centering
\begingroup\footnotesize
\scalebox{0.9}{
\begin{tabular}{rlllllll|l}
  \hline
\hline
  & GPBoost & LinearME & LSBoost & CatBoost & mboost & MERF & REEMtree & GPBOOS \\ 
  \hline
RMSE & \bf{1.100} & 1.342 & 1.156 & 1.183 & 1.335 & 1.104 & 1.171 & 1.102 \\ 
  (SE) & (0.00144) & (0.00181) & (0.00205) & (0.00184) & (0.00215) & (0.00137) & (0.00163) & (0.00151) \\ 
  \lbrack p-val\rbrack &  & [4.66e-129] & [2.85e-59] & [3.45e-88] & [1.4e-112] & [1.31e-09] & [2.02e-83] & [8.06e-06] \\ 
   \hline
RMSE\_new & \bf{1.458} & 1.635 & 1.493 & 1.464 & 1.520 & 1.460 & 1.506 & 1.459 \\ 
  (SE) & (0.00292) & (0.00282) & (0.00294) & (0.00313) & (0.00285) & (0.00292) & (0.00298) & (0.00293) \\ 
  \lbrack p-val\rbrack &  & [6.43e-122] & [1.5e-51] & [2.09e-05] & [8.47e-86] & [6.08e-06] & [2.45e-78] & [6.45e-05] \\ 
   \hline
RMSE\_F & \bf{0.3370} & 0.8141 &  &  & 0.5495 & 0.3494 & 0.5111 & 0.3411 \\ 
  (SE) & (0.00243) & (0.00197) &  &  & (0.0024) & (0.00211) & (0.00227) & (0.00253) \\ 
  \lbrack p-val\rbrack &  & [2.37e-139] &  &  & [1.79e-102] & [4.13e-14] & [1.05e-91] & [2.59e-06] \\ 
  RMSE\_b & \bf{0.3193} & 0.3793 &  &  & 0.6934 & 0.3244 & 0.3363 & 0.3197 \\ 
  (SE) & (0.00109) & (0.00136) &  &  & (0.00221) & (0.00137) & (0.0012) & (0.00114) \\ 
  \lbrack p-val\rbrack &  & [8.1e-82] &  &  & [4.01e-121] & [1.55e-08] & [5.04e-46] & [0.0611] \\ 
   \hline
RMSE $\sigma^2_1$ & 0.07466 & 0.07812 &  &  &  & 0.07484 & 0.07402 & 0.07531 \\ 
  Bias $\sigma^2_1$ & 0.005512 & 0.005330 &  &  &  & 0.009059 & 0.003182 & 0.005787 \\ 
  RMSE $\sigma^2$ & 0.1703 & 0.6583 & 0.1361 & 0.07989 & 0.6286 & 0.1183 & 0.1477 & 0.1375 \\ 
  Bias $\sigma^2$ & -0.1672 & 0.6565 & -0.1172 & 0.07059 & 0.6266 & 0.1151 & 0.1436 & 0.1348 \\ 
   \hline
Time (s) & 0.9356 & 0.05323 & 0.3289 & 3.325 &   14.00 & 416.7 & 1.349 & 3.077 \\ 
   \hline
\hline
\end{tabular}
}
\endgroup
\caption{Results of the simulated experiments for the grouped data 
                          and the predictor function F = `hajjem'. For the prediction accuracy metrics for $y$, $F$, and $b$, averages over 
                      the simulation runs are reported. Corresponding standard errors are in parentheses.
                      P-values are calculated using paired t-tests comparing the GPBoost algorithm to the other approaches.
                      `GBPOOS' refers to the GPBoostOOS algorithm. Results for the test data with new groups are denoted by `\_new'. The smallest values are in boldface (excluding `GPBOOS'). 
                      An empty value indicates that the required predictions or estimates cannot be calculated.
                      Time refers to the average wall-clock time in seconds.} 
\label{results_hajjem_one_way}
\end{table}

Considering the results for the grouped data reported in Table \ref{results_hajjem_one_way}, we find that the GPBoost algorithm significantly outperforms all other methods in all prediction accuracy measures including the ones concerning the learning, or prediction, of both the predictor function $F(\cdot)$ and the random effects $b$. Apart from the GPBoostOOS algorithm, the MERF algorithm has the second highest prediction accuracy. Gradient boosting with a square loss including the grouping variable as a categorical variable using the approach of \texttt{LightGBM} (LSBoost), CatBoost, and mboost all have significantly lower prediction accuracy compared to the GPBoost algorithm for data with both existing groups (`RMSE') and new groups (`RMSE\_new'). LSBoost, i.e., \texttt{LightGBM}, performs better than CatBoost for predicting data with existing groups and the opposite holds for new groups. Apparently, LSBoost can learn group effects, i.e., the effect of a high-cardinality categorical variable, better than CatBoost, but CatBoost learns the predictor function $F(\cdot)$ for the non-categorical predictor variables $X$ better than LSBoost. The reason for the former is unclear to us. The likely reason for the latter finding is that LSBoost has higher variance and does some overfitting due to the high-cardinality categorical variable, and CatBoost mitigates this due to its ordered boosting mode. Not surprisingly, a linear mixed effects model (LinearME) performs considerably worse than the GPBoost algorithm in all prediction accuracy measures and the estimation of the predictor function $F(\cdot)$.

Concerning variance parameter estimates, we observe no major differences among the methods in the RMSE of the variance of the random effects $\sigma^2_1$. For the error variance $\sigma^2$, we observe large differences, though. In particular, all methods have biased estimates for the error variance parameter. As expected, the linear model has an upward bias which compensates for the misspecification of $F(\cdot)$, and the GPBoost algorithm has a downward bias. The latter finding is in line with the recent observation that state-of-the-art machine learning methods can interpolate the training data while at the same time having a low generalization error as discussed in Section \ref{gpboostoost}. When estimating the covariance parameters on out-of-sample data using the GPBoostOOS algorithm, the RMSE of $\sigma^2$ is smaller and there is no downward bias anymore. Concerning computational time, not surprisingly, the linear model has the lowest computational time. LSBoost is approximately three times faster than GPBoost, and CatBoost is approximately three times slower than GPBoost. Further, the GPBoost algorithm runs approximately one order of magnitude faster than mboost and several orders of magnitude faster than the MERF algorithm.

For the `friedman3' predictor function for the grouped data, we find qualitatively similar results as for the `hajjem' predictor function; see Table \ref{results_friedman3_one_way} in the appendix. Further, as expected, a linear mixed effects model performs best in the case where the true predictor function is linear; see Table \ref{results_linear_one_way} in the appendix. The differences between the linear model and the GPBoost algorithm are of small, albeit significant, magnitude. Except for the linear model, the GPBoost algorithm significantly outperforms all other approaches when $F(\cdot)$ is a linear function. The MERF algorithm has a relatively high RMSE for the predictor function $F(\cdot)$ which also translates to low prediction accuracy for the response variable for data with both existing groups (`RMSE') and new groups (`RMSE\_new'). For the `friedman3' and `linear' predictor functions, we again observe that the GPBoostOOS algorithm has a lower RMSE for the error variance $\sigma^2$ compared to the GPBoost algorithm and no downward bias. 

\begin{table}[ht!]
\centering
\begingroup\footnotesize
\scalebox{0.9}{
\begin{tabular}{rlllll|l}
  \hline
\hline
  & GPBoost & LinearGP & LSBoost & mboost & TwoStep & GPBOOS \\ 
  \hline
RMSE & \bf{1.374} & 1.466 & 1.474 & 1.479 & 1.406 & 1.372 \\ 
  (SE) & (0.00574) & (0.00567) & (0.0067) & (0.00633) & (0.00625) & (0.00556) \\ 
  \lbrack p-val\rbrack &  & [1.83e-45] & [2.33e-35] & [8.74e-42] & [1.87e-12] & [0.329] \\ 
  CRPS & \bf{0.8011} & 0.8183 & 0.8807 & 0.8280 & 0.8043 & 0.7683 \\ 
  (SE) & (0.00564) & (0.00295) & (0.00673) & (0.00354) & (0.00448) & (0.00305) \\ 
  \lbrack p-val\rbrack &  & [0.000359] & [1.09e-22] & [5.02e-07] & [0.484] & [1.23e-10] \\ 
   \hline
RMSE\_ext & \bf{1.519} & 1.599 & 1.611 & 1.899 & 1.536 & 1.517 \\ 
  (SE) & (0.00978) & (0.00896) & (0.0164) & (0.0377) & (0.00963) & (0.00928) \\ 
  \lbrack p-val\rbrack &  & [2.8e-33] & [3.91e-11] & [3.08e-16] & [5.15e-05] & [0.551] \\ 
  CRPS\_ext & \bf{0.8689} & 0.8960 & 0.9807 & 1.101 & 0.8808 & 0.8526 \\ 
  (SE) & (0.00686) & (0.00519) & (0.0131) & (0.0261) & (0.00626) & (0.00535) \\ 
  \lbrack p-val\rbrack &  & [7.15e-11] & [1.92e-17] & [1.32e-13] & [0.000694] & [9.99e-06] \\ 
   \hline
RMSE\_sum & \bf{11.70} & 11.90 & 13.80 & 18.44 & 11.90 & 11.62 \\ 
  (SE) & (0.211) & (0.204) & (0.328) & (0.697) & (0.206) & (0.206) \\ 
  \lbrack p-val\rbrack &  & [0.00573] & [1.14e-12] & [4.05e-15] & [0.00416] & [0.151] \\ 
  CRPS\_sum & \bf{6.468} & 6.509 & 8.508 & 10.45 & 6.710 & 6.331 \\ 
  (SE) & (0.115) & (0.103) & (0.22) & (0.408) & (0.117) & (0.102) \\ 
  \lbrack p-val\rbrack &  & [0.403] & [5.92e-19] & [1.69e-15] & [2.69e-06] & [0.000978] \\ 
   \hline
RMSE\_F & \bf{0.7125} & 0.8741 &  & 0.7538 & 0.7538 & 0.7102 \\ 
  (SE) & (0.00869) & (0.00692) &  & (0.00854) & (0.00854) & (0.00817) \\ 
  \lbrack p-val\rbrack &  & [1.17e-49] &  & [4.4e-13] & [4.4e-13] & [0.577] \\ 
  RMSE\_b & \bf{0.7080} & 0.7091 &  & 0.7285 & 0.7285 & 0.7034 \\ 
  (SE) & (0.00729) & (0.0065) &  & (0.00714) & (0.00714) & (0.00696) \\ 
  \lbrack p-val\rbrack &  & [0.717] &  & [6.17e-07] & [6.17e-07] & [0.0609] \\ 
   \hline
RMSE $\sigma^2_1$ & 0.2810 & 0.2690 &  &  & 0.4950 & 0.2597 \\ 
  Bias $\sigma^2_1$ & -0.004313 & -0.006602 &  &  & -0.4160 & 0.005215 \\ 
  RMSE $\rho$ & 0.08537 & 0.03392 &  &  & 0.03529 & 0.03221 \\ 
  Bias $\rho$ & 0.008375 & -0.004949 &  &  & -0.002940 & -0.004969 \\ 
  RMSE $\sigma^2$ & 0.5672 & 0.6185 & 0.4955 & 0.9024 & 0.3251 & 0.4975 \\ 
  Bias $\sigma^2$ & -0.4539 & 0.5778 & -0.2283 & 0.8832 & -0.1561 & 0.4559 \\ 
   \hline
Time (s) & 27.30 & 0.5509 & 0.09183 & 3.204 & 0.5057 & 36.12 \\ 
   \hline
\hline
\end{tabular}
}
\endgroup
\caption{Results of the simulated experiments for the spatial data 
                          and the predictor function F = `hajjem'. For the prediction accuracy metrics for $y$, $F$, and $b$, averages over 
                      the simulation runs are reported. Corresponding standard errors are in parentheses.
                      P-values are calculated using paired t-tests comparing the GPBoost algorithm to the other approaches.
                      `GBPOOS' refers to the GPBoostOOS algorithm. Results for the ``extrapolation" test data are denoted by `\_ext', and results for the predictions of sums are denoted by `\_sum'. The smallest values are in boldface (excluding `GPBOOS'). 
                      An empty value indicates that the required predictions or estimates cannot be calculated.
                      Time refers to the average wall-clock time in seconds.} 
\label{results_hajjem_spatial}
\end{table}

We next discuss the results for the spatial data reported in Table \ref{results_hajjem_spatial}. We find that the GPBoost algorithm has higher prediction accuracy compared to all alternative approaches in all measures with most of the differences being highly significant. In particular, the GPBoost algorithm has higher prediction accuracy in terms of the RMSE and CRPS compared to both a linear Gaussian process model with $F(X)=X^T\beta$ and independent boosting including the coordinates in the predictor function $F(\cdot)$. A two-step approach performs better than a linear model and independent boosting in the majority of prediction accuracy measures, but it has lower prediction accuracy compared to the GPBoost algorithm. The results for the `friedman3' predictor function reported in Table \ref{results_friedman3_spatial} in the appendix are similar. The GPBoost algorithm significantly outperforms all other approaches in all prediction accuracy measures. Not surprisingly, a linear model has the highest prediction accuracy when the data generating process is linear; see Table \ref{results_linear_spatial} in the appendix. However, despite the relatively small sample size of $n=500$, the differences in prediction accuracy between the linear Gaussian process model and the GPBoost algorithm are relatively small. As expected due to the arguments laid out in Section \ref{exist_work}, the differences between the GPBoost algorithm and independent boosting including the coordinates in the predictor variables (LSBoost) are the largest for probabilistic predictions of sums (`CRPS\_sum') for all three predictor functions.


\section{Real-world Applications}\label{data_appl}
In the following, we apply the GPBoost algorithm to several real-world data sets and compare its prediction accuracy to alternative methods. We consider grouped data, spatial data, and several UCI benchmark data sets. 

\subsection{Grouped random effects for categorical data: wages data}
We first apply the GPBoost algorithm to a grouped data set, i.e., a data set with a high-cardinality categorical predictor variable. We use panel data from the National Longitudinal Survey of Young Working Women consisting of $28'534$ observations for $4'711$ young working women. This data was collected within the ``National Longitudinal Survey" over the years 1968-1988, and it can be downloaded from \url{https://www.stata-press.com/data/r10/nlswork.dta}. The response variable is the logarithmic real wage, the persons ID number constitutes the high-cardinality categorical grouping variable, and the data includes the following predictor variables: \texttt{age}, \texttt{ttl\_exp} (total work experience), \texttt{tenure} (job tenure in years), \texttt{not\_smsa} (1 if not SMSA), \texttt{south} (1 if south), \texttt{year} (interview year), \texttt{msp} (1 if married, spouse present), \texttt{nev\_mar} (1 if never married), \texttt{collgrad} (1 if college graduate), \texttt{c\_city} (1 if central city), \texttt{hours} (usual hours worked), \texttt{grade} (current grade completed), \texttt{ind\_code} (industry of employment), \texttt{occ\_code} (occupation), and \texttt{race} (1=white, 2=black, 3=other). The low-cardinality categorical variables \texttt{ind\_code}, \texttt{occ\_code}, \texttt{race}, and \texttt{year} are dummy coded. Further, we also include the square of \texttt{age}, \texttt{ttl\_exp}, and \texttt{tenure} in the linear model.

We compare the prediction accuracy of different approaches using nested $4$-fold cross-validation. Specifically, all observations are partitioned into four disjoint sets, and, in every fold, one of the sets is used as test data and the remaining data is used for training. Note that the test data sets contain both groups that are observed and unobserved in the training data. We compare the GPBoost algorithm to the same alternative methods as in the simulation study; see Section \ref{methods}. Concerning independent gradient boosting, we also report the results when including the high-cardinality categorical variable as a continuous variable (`LSBoostCont') in addition to including it as a categorical variable using the approach of \texttt{LightGBM} (`LSBoostCat'). Tuning parameters are chosen by doing an additional inner $4$-fold cross-validation in the same manner as the outer cross-validation on every of the four training data sets using the mean square error as selection criterion.\footnote{Note that one has to be careful when doing cross-validation for dependent data to avoid biased estimates of the generalization error as pointed out by, e.g., \citet{rabinowicz2020cross}. However, apart from the fact that the validation and test data sets are of slightly different sizes, our cross-validation setting preserves the distributional relation between the inner fold training and validation data sets and the training and test data sets and, consequently, no bias is introduced.} We consider the same set of tuning parameters as in the simulated experiments; see Section \ref{tune_pars}.

The results are summarized in Table \ref{results_wages} and Figure \ref{results_per_fold_wages}. Table \ref{results_wages} reports the average test RMSE over all folds, and Figure \ref{results_per_fold_wages} graphically displays the test RMSE per fold. We find that the GPBoost algorithm has higher prediction accuracy compared to all other methods on average over all folds and also for every fold separately. The second-best method is the MERF algorithm, and CatBoost has the third-lowest test RMSE. In contrast to the simulated experiments but as suggested by the authors of \texttt{LightGBM}\footnote{\url{https://lightgbm.readthedocs.io/en/latest/Advanced-Topics.html\#categorical-feature-support} (retrieved on August 22, 2022)}, considering the grouping variable as a continuous variable (`LSBoostCont') results in higher prediction accuracy compared to the approach of \texttt{LightGBM} for categorical variables (`LSBoostCat').

\begin{table}[ht!]
\centering
\begingroup\footnotesize
\begin{tabular}{rlllllll}
  \hline
\hline
GPBoost & LinearME & LSBoostCat & LSBoostCont & CatBoost & mboost & MERF & REEMtree \\ 
  \hline
\bf{0.296} & 0.305 & 0.321 & 0.313 & 0.303 & 0.331 & 0.299 & 0.322 \\ 
   \hline
\hline
\end{tabular}
\endgroup
\caption{Average test RMSE for the wages data.} 
\label{results_wages}
\end{table}

\begin{figure*}[ht!]
	\centering
	\includegraphics[width=0.6\textwidth]{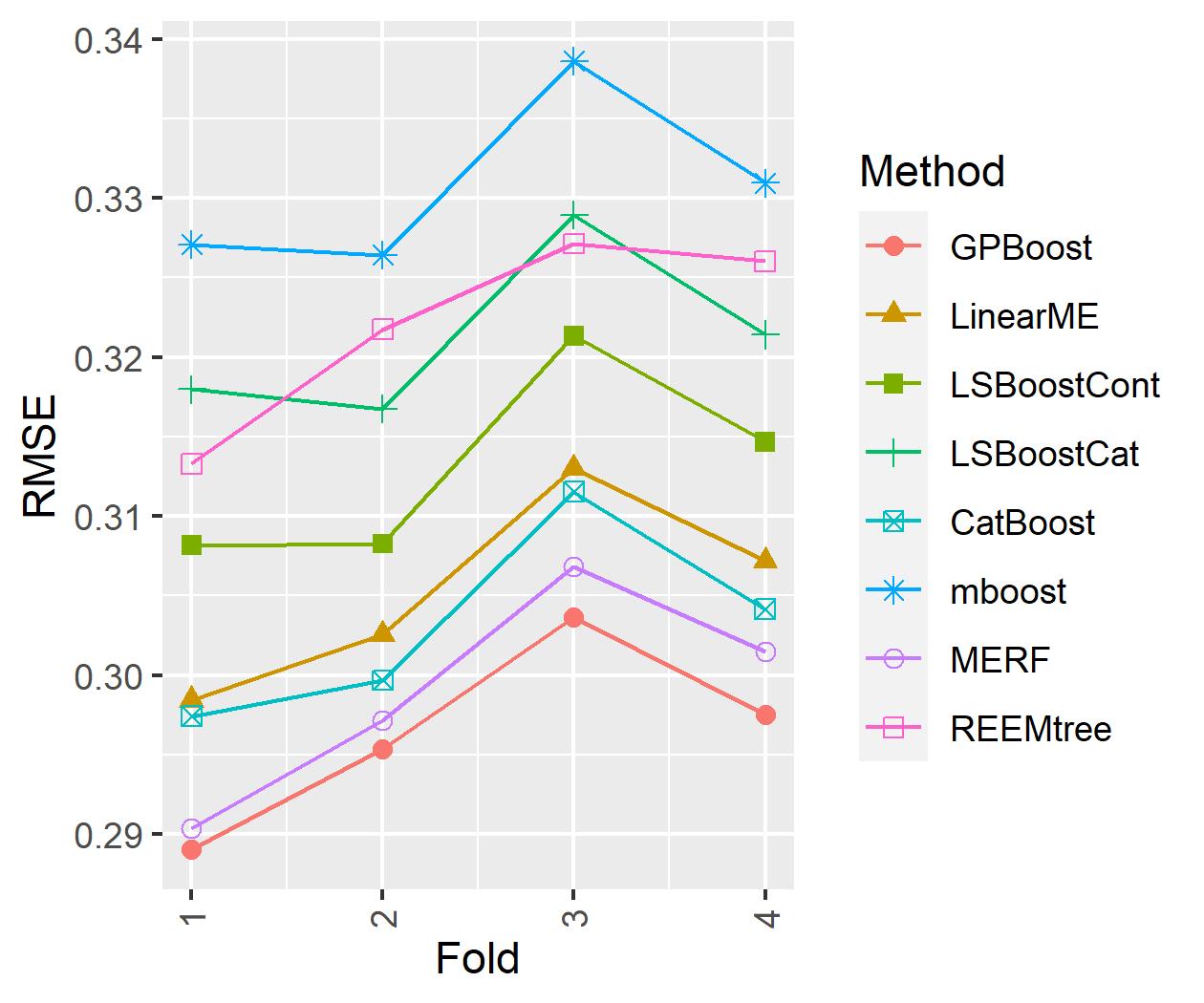}
	\caption{Test RMSE for the wages data for every fold separately.}
	\label{results_per_fold_wages} 
\end{figure*}

\subsection{Gaussian processes for spatial data: house price data}\label{house_data}
We next apply the GPBoost algorithm to a spatial data set. We use house price data for $25'357$ single-family homes sold in Lucas County, Ohio. This data is available in the \texttt{spData} R package \citep{bivand2008applied}, and it has been previously studied by \citet{lesage2004models,bivand2011after,dube2013dealing}. The response variable is the logarithmic selling price, and the data includes the following predictor variables: \texttt{age}, \texttt{stories} (factor with levels \{one, bilevel, multilvl, one+half, two, two+half, three\}), \texttt{TLA} (total living area), \texttt{wall} (factor with levels \{stucdrvt, ccbtile, metlvnyl, brick, stone, wood partbrk\}), \texttt{beds} (number of bedrooms), \texttt{baths} (number of full baths), \texttt{halfbaths} (number of halfhbaths), \texttt{frontage} (lot frontage), \texttt{depth}, \texttt{garage} (factor with levels \{no garage, basement, attached, detached, carport\}), \texttt{garagesqft}, \texttt{rooms} (number of rooms), \texttt{lotsize}, \texttt{sdate} (year in which the house was sold, $1993\leq$ \texttt{sdate} $\leq 1998$), as well as longitude-latitude coordinates for the location. For the Gaussian process model with a linear predictor function, we follow \citet{bivand2011after} and also include the square and cube of age as predictor variables, and as in \citet{dube2013dealing}, we logarithmize the total living area and the lot size. The left plot of Figure \ref{plot_house_data} shows the observation locations and observed logarithmic prices. Further, the right plot of Figure \ref{plot_house_data} shows smoothed differences of log-prices from the global mean. The latter are obtained by estimating a zero-mean Gaussian process to the differences of the log-prices from the global mean and calculating the posterior mean.
\begin{figure*}[ht!]
	\centering
	\includegraphics[width=0.49\textwidth]{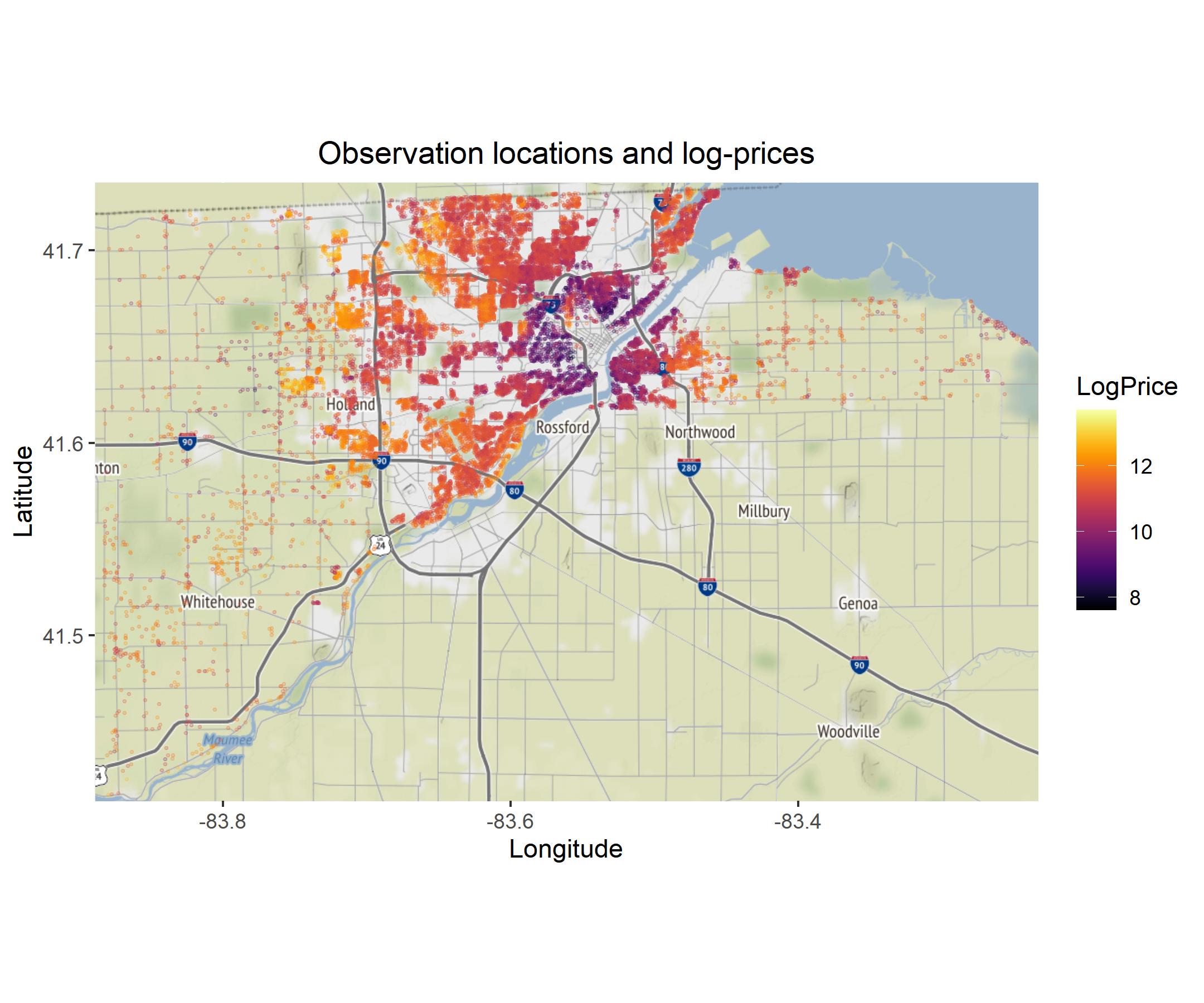}
	\includegraphics[width=0.49\textwidth]{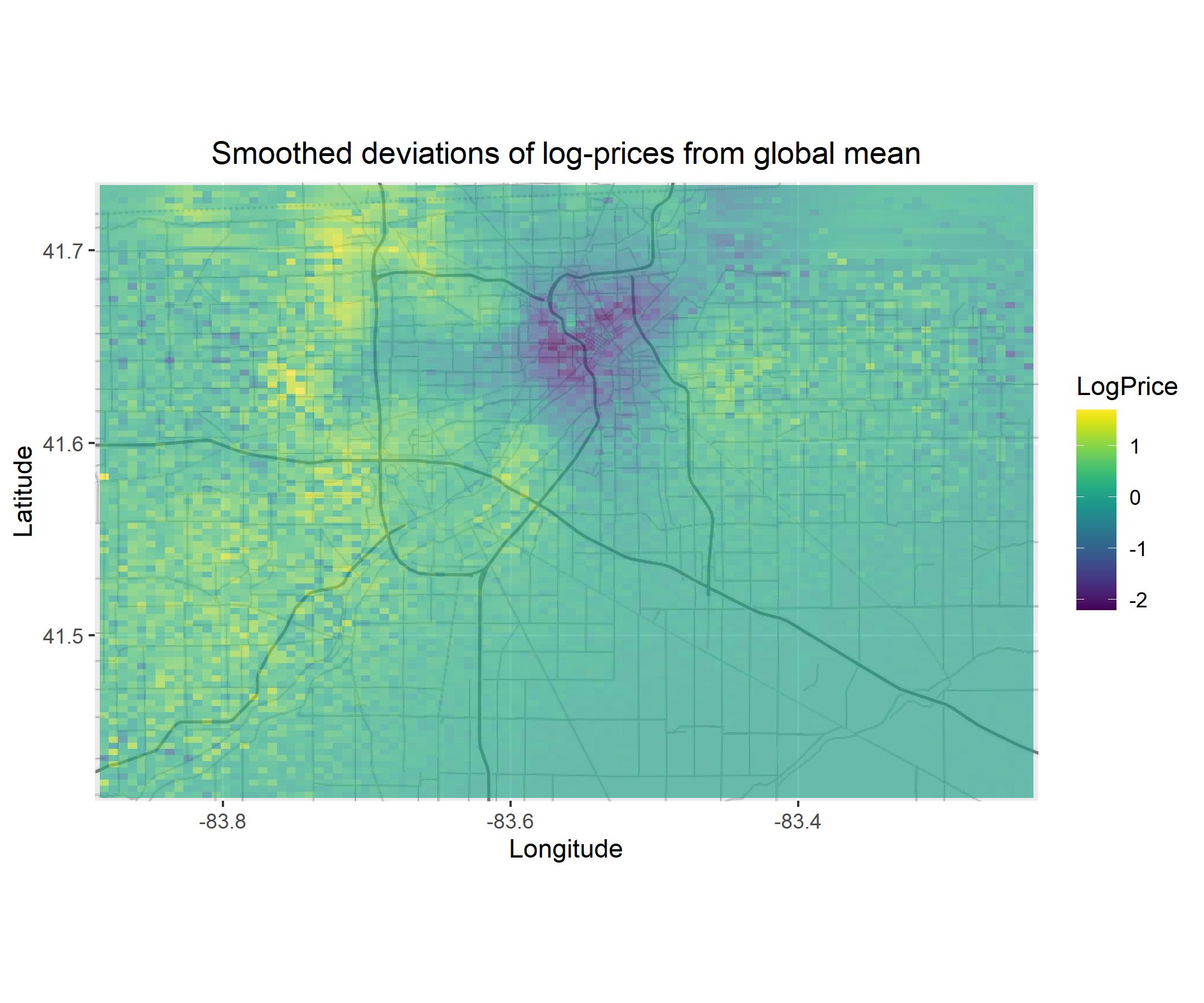}
	\caption{Illustration of house price data: map with observation locations and log-prices (left plot) and smoothed differences of log-prices from the global mean (right plot).}
	\label{plot_house_data} 
\end{figure*}

We compare the GPBoost algorithm to the same approaches as in the simulation study: a linear Gaussian process model with $F(X)=X^T\beta$, independent gradient boosting, mboost with spatial spline base learners, and a two-step approach; see Section \ref{methods} for more information. For independent boosting and the GPBoost algorithm, we include the coordinates as predictor variables in the predictor function $F(\cdot)$. For comparison, we also report results when excluding the coordinates from the predictor function. For the Gaussian process model and the GPBoost algorithm, we use an exponential covariance function. Due to the relatively large sample size, we use a Vecchia approximation as outlined in Sections \ref{largedata} and \ref{largedata2} for the Gaussian process-based models. Specifically, for training, we use a Vecchia approximation for the response variable with $50$ nearest neighbors and a random ordering of the observations; see Section \ref{vecchia_resp}. For prediction, we use the result in Proposition \ref{PredVEcchiaOF} with the observed data ordered first, conditioning on observed data only when calculating the Vecchia approximation, and using $500$ nearest neighbors.\footnote{We set the number of nearest neighbors for prediction to the largest value that our computational budget supports. Using a smaller number of nearest neighbors decreases the prediction accuracy. Whether a higher prediction accuracy can be achieved with an even larger number of nearest neighbors is unclear.} 

Prediction accuracy is evaluated by partitioning the data into expanding window training data sets and temporal out-of-sample test data sets. Specifically, learning is done on an expanding window containing all data up to the year $t-1$, and predictions are calculated for the next year $t$. We use the three years $t\in\{1996,1997,1998\}$ as test data. For every $t$, we additionally split the training data into two subsets: inner training data containing all data up to year $t-2$ and validation data for the year $t-1$. Tuning parameters of the boosting methods are chosen separately for $t\in\{1996,1997,1998\}$ by learning on the inner training data and minimizing the mean square error on the validation data. We consider the same grid of tuning parameters as in the simulated experiments; see Section \ref{tune_pars}. 

In addition to univariate predictions, we also generate predictions for the total value of multiple houses as explained in the following. For every test set, we randomly select $100$ times disjoint sets of $20$ observations that are close-by in space. Such sets of spatially close samples are determined by first randomly selecting a sample, finding its $19$ nearest neighbors, and then iteratively continuing in the same manner with the remaining locations. For these sets of $20$ objects, we calculate predictive distributions for their sums as described in Section \ref{eval_crit}.\footnote{For simplicity, we predict the sum on the logarithmic scale. If the sum should be predicted on the original scale, simulation is required as the sum of dependent log-normal variables does not follow a standard distribution.} For generating multivariate predictive distributions of dimension $20$ required for the latter, we also use the Vecchia approximation as outlined in Proposition \ref{PredVEcchiaOF} with the observed data ordered first, but we condition on all data and not just the observed data. The latter is computationally more expensive, but it allows for obtaining non-diagonal and more accurate predictive covariance matrices; see Section \ref{largedata2} for more information.

Further, to disentangle the effect of the Vecchia approximation and the standard GPBoost algorithm without an approximation for large data, we also run the experiments on smaller subsets of the full data. This allows to do all Gaussian process-related calculations exactly. Such smaller-sized subsets are obtained as follows. For every year, we randomly partition the data into ten disjoint equal-sized subsets. Every such subset is used as a training data set, and for every training data set, we use another one of the ten subsets as test set such that each subset is a test set once. In doing so, we obtain $60$ training and test data sets with an average sample size of approximately $400$. For these small subsets, tuning parameters are chosen using $4$-fold cross-validation on every training data set.

As in the simulated experiments in Section \ref{simul}, we measure the accuracy of both point predictions and probabilistic predictions using the RMSE and the CRPS. Further, we also evaluate the accuracy of $\alpha-$quantile predictions for $\alpha=0.05$. We do this since, in practice, predictions for lower quantiles are required for risk management purposes such as calculating a value-at-risk. Quantile predictions are obtained by assuming Gaussian predictive distributions with means and variances determined by the fixed and random effects. If a method does not contain random effects, we use the residual variance as in the simulated experiments. Quantile predictions are evaluated using the quantile loss \citep{gneiting2007probabilistic}
\begin{equation}\label{quant_loss}
S(y_s,\hat y_\alpha) = (y_s-\hat y_\alpha) (\alpha-\mathbbm{1}_{\{y_s\leq\hat y_\alpha \}}),
\end{equation}
where $\hat y_\alpha$ denotes the predicted quantile and $y_s$ the observed data. 

Table \ref{results_housing} reports the prediction accuracy results averaged over the different test sets, Figure \ref{results_per_fold_housing} graphically displays the results per test set for the full data, and Figure \ref{results_per_fold_housing_small} illustrates the results of the small subsets over different folds. Overall, the GPBoost algorithm has the highest prediction accuracy. Specifically, we find that the GPBoost algorithm has higher prediction accuracy compared to a linear Gaussian process model, the mboost algorithm, and a two-step approach in all prediction accuracy measures. This holds true for both the full data set and the small random subsets for which exact Gaussian process inference is carried out. For univariate predictions (`RMSE', `CRPS', and `QL'), the GPBoost algorithm also has almost the same prediction accuracy as independent boosting (`LSBoost'). For the prediction of sums of several objects and the small subsets, the GPBoost algorithm has considerably higher prediction accuracy compared to independent boosting in all measures (`RMSE\_sum', `CRPS\_sum', `QL\_sum', `RMSE\_small', `CRPS\_small', and `QL\_small').

\begin{table}[ht!]
\centering
\begingroup\footnotesize
\begin{tabular}{rlllll|ll}
  \hline
\hline
  & GPBoost & LinearGP & LSBoost & mboost & TwoStep & GPBoost\_ex & LSBoost\_ex \\ 
  \hline
RMSE & \bf{0.266} & 0.353 & 0.266 & 0.332 & 0.306 & 0.290 & 0.338 \\ 
  CRPS & 0.144 & 0.176 & \bf{0.143} & 0.174 & 0.162 & 0.155 & 0.180 \\ 
  QL & \bf{0.0358} & 0.0516 & 0.0360 & 0.0468 & 0.0384 & 0.0370 & 0.0461 \\ 
   \hline
RMSE\_sum & \bf{1.72} & 2.50 & 1.94 & 3.14 & 2.31 & 2.17 & 3.73 \\ 
  CRPS\_sum & \bf{1.07} & 1.34 & 1.23 & 1.85 & 1.38 & 1.29 & 2.30 \\ 
  QL\_sum & \bf{0.185} & 0.446 & 0.234 & 0.517 & 0.320 & 0.263 & 0.705 \\ 
   \hline
RMSE\_small & \bf{0.307} & 0.331 & 0.341 & 0.358 & 0.334 & 0.307 & 0.368 \\ 
  CRPS\_small & \bf{0.162} & 0.171 & 0.189 & 0.189 & 0.179 & 0.162 & 0.196 \\ 
  QL\_small & \bf{0.0402} & 0.0431 & 0.0611 & 0.0489 & 0.0470 & 0.0400 & 0.0554 \\ 
   \hline
\hline
\end{tabular}
\endgroup
\caption{Average prediction accuracy measures for the housing data. 
      Results for the predictions of sums are denoted by `\_sum'. 
      Results for the small subsampled data sets are denoted by `\_small'.
      `QL' denotes the quantile loss.
      `\_ex' denotes results when not including the coordinates in 
                   the variables for the predictor function.} 
\label{results_housing}
\end{table}

The comparison between the GPBoost algorithm when not including the locations in the predictor function $F(\cdot)$ (`GPBoost\_ex') and the corresponding independent boosting version (`LSBoost\_ex') shows that there is important residual spatial variation also conditional on the effect of the predictor variables. Considering the results for the full data, the GPBoost algorithm with the coordinates included in the predictor function (`GPBoost') performs better compared to the GPBoost algorithm when the coordinates are not included in the predictor function (`GPBoost\_ex'). This is an indication that there are interaction effects between the predictor variables and the spatial locations. For the small data, the results of `GPBoost' and `GPBoost\_ex' are almost equivalent. A possible explanation for this is that the sample size is too small to allow for modeling such interaction effects.
\begin{figure*}[ht!]
	\centering
	\includegraphics[width=\textwidth]{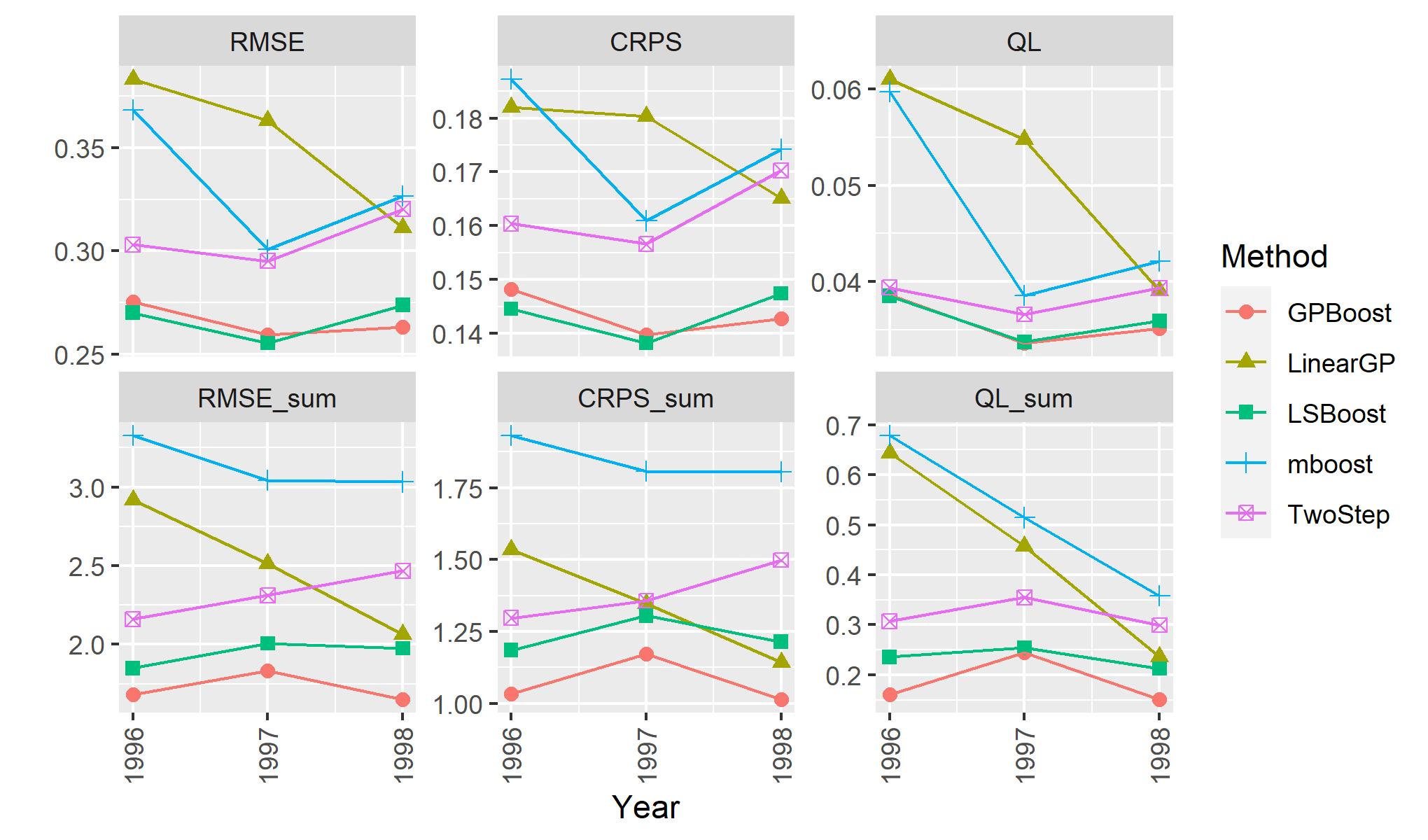}
	\caption{Prediction accuracy for the housing data for every fold separately. Results for the predictions of sums are denoted by `\_sum'. `QL' denotes the quantile loss.}
	\label{results_per_fold_housing} 
\end{figure*}

\begin{figure*}[ht!]
	\centering
	\includegraphics[width=0.8\textwidth]{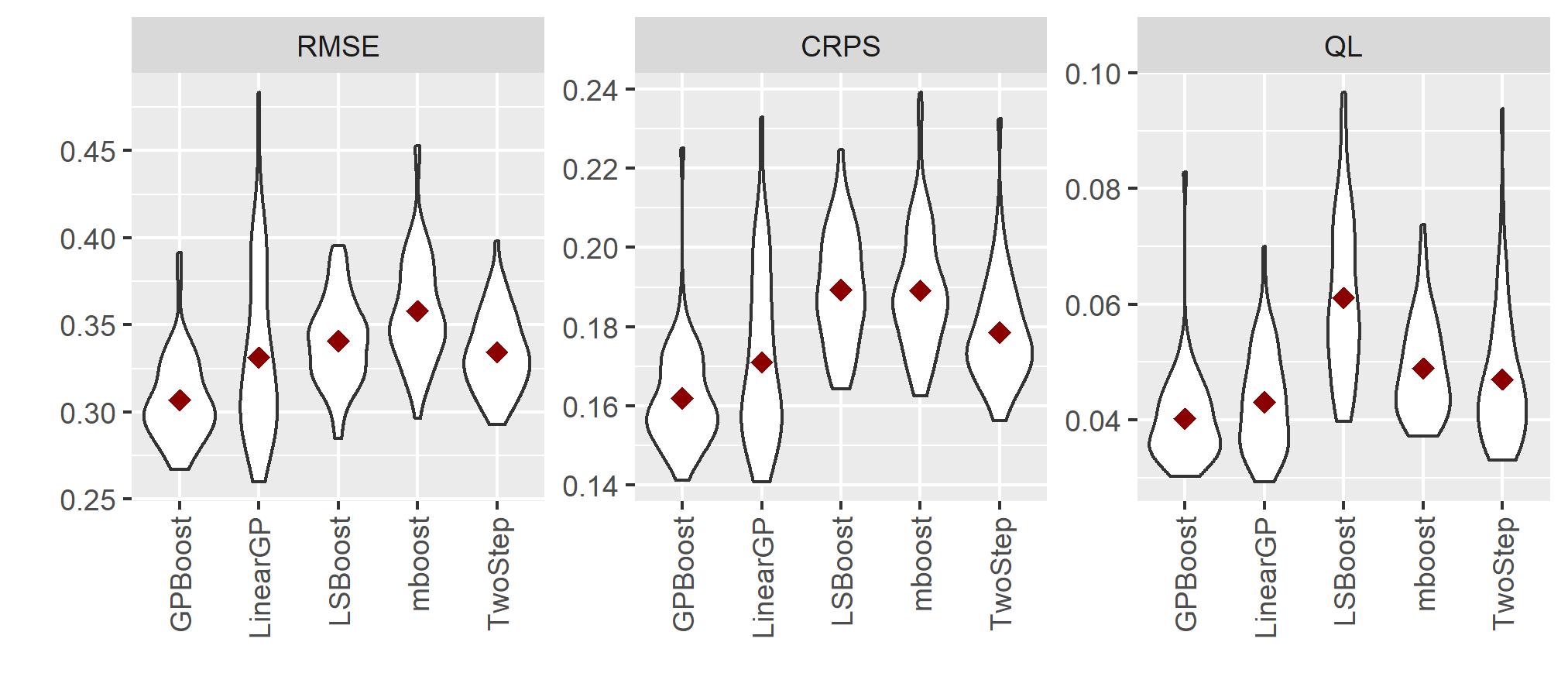}
	\caption{Violin plots illustrating the prediction accuracy for the small subsets of the housing data. The red rhombi represent means over the sample splits.}
	\label{results_per_fold_housing_small} 
\end{figure*}

For illustration, we plot in Figure \ref{pred_house_price} predictive posterior means and variances of the Gaussian process part of the GPBoost model applied to the entire data set when not including the locations in the fixed effects function. We use the tuning parameters obtained on the last validation data set. Comparing the predicted mean field to the one of Figure \ref{plot_house_data}, we see that in some areas the spatial effect is different when factoring out the effect of the other predictor variables. As expected, predictive variances are high in areas with few observations.

\begin{figure*}[ht!]
	\centering
	\includegraphics[width=0.49\textwidth]{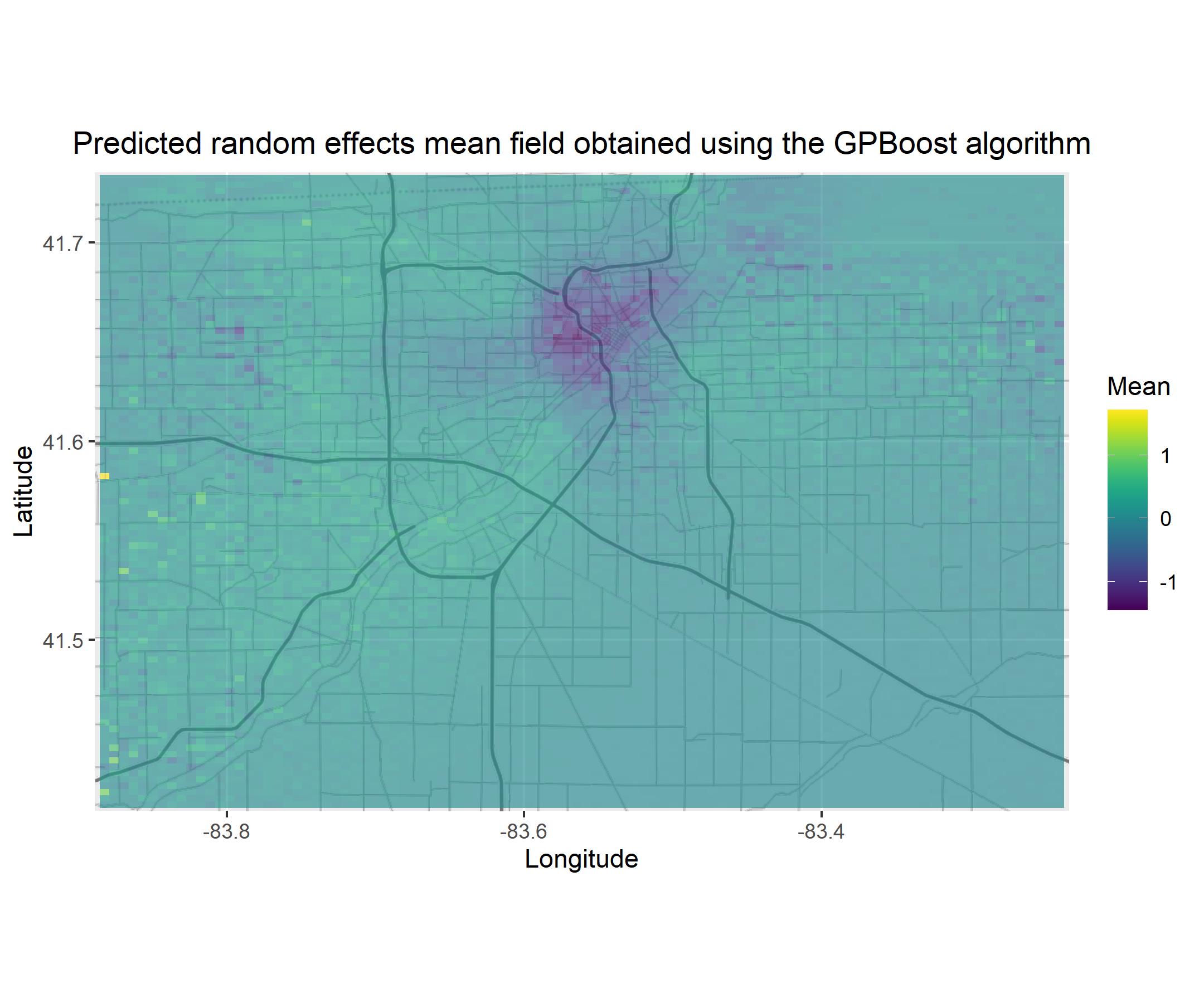}
	\includegraphics[width=0.49\textwidth]{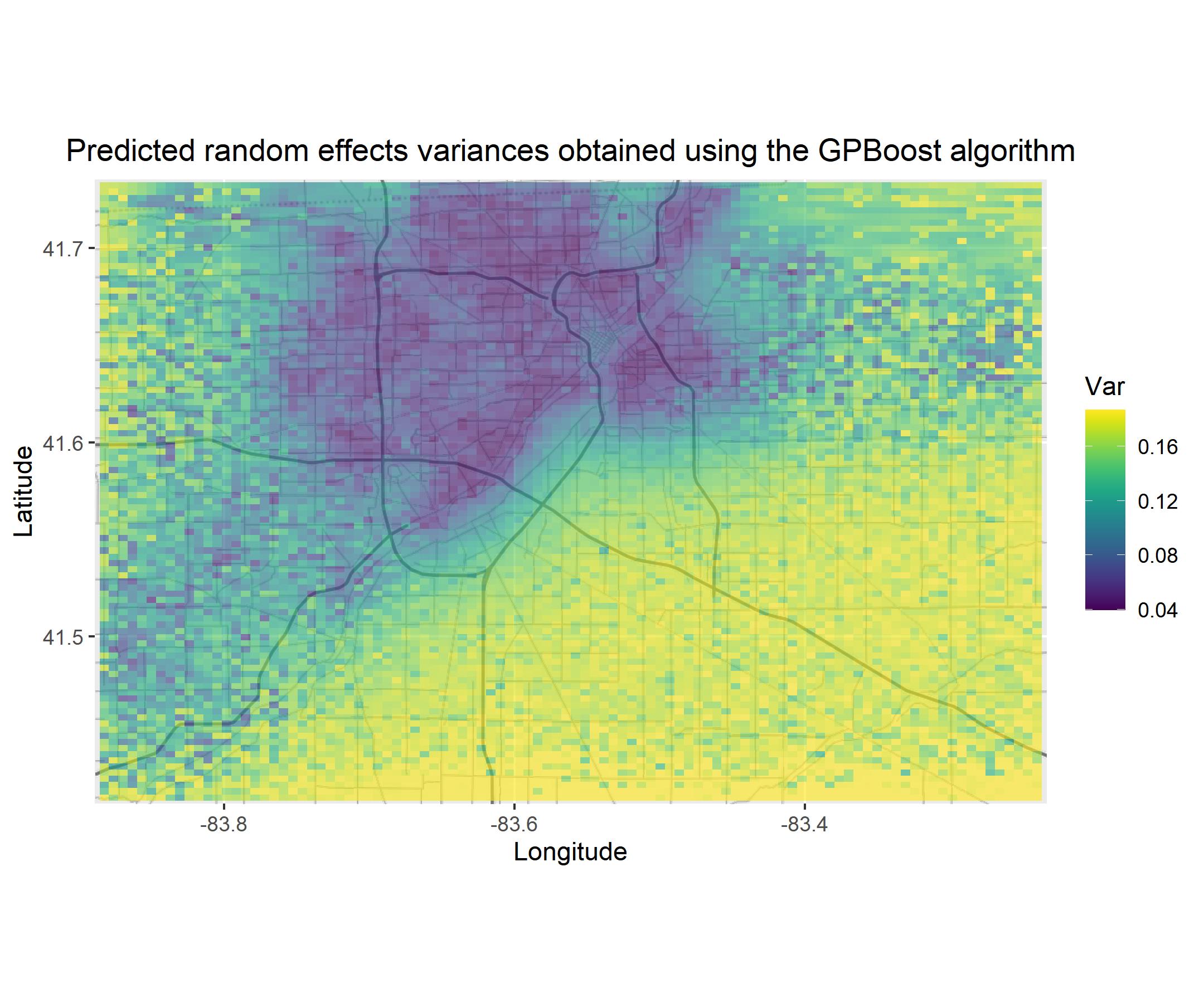}
	\caption{Predicted random effects means and variances obtained using the GPBoost algorithm when not including the locations in the fixed effects function.}
	\label{pred_house_price} 
\end{figure*}

\subsection{Gaussian processes for non-spatial data: UCI benchmark data sets}
In spatial statistics, one usually distinguishes between locations $S$ and other predictor variables $X$. The locations $S$ are used as input features for a spatial Gaussian process and the effect of $X$ is modeled using a predictor function $F(\cdot)$. This is the way we have used Gaussian processes so far in this paper. In machine learning, Gaussian processes are often used to model non-spatial data by including non-spatial predictor variables as input features for Gaussian process models. 

In the following, we compare the GPBoost algorithm to independent boosting and Gaussian process regression using several UCI data set repository\footnote{\url{http://archive.ics.uci.edu/ml/index.php}} benchmark data sets. In doing so, all predictor variables are used as features $X$ in $F(\cdot)$ for boosting and as input features $S$ in the Gaussian process models. In particular, for the GPBoost algorithm, the predictor variables are used both as features $X$ for the function $F(\cdot)$ and as input features for the Gaussian process part, and $X$ and $S$ are equal. Table \ref{data_sum} shows the data sets. All data sets have a relatively small sample size such that the Gaussian process-related calculations can be done exactly without using an approximation for large data. The analysis of the GPBoost algorithm for larger benchmark data sets with different approximations for large (non-spatial) data is left for future research.

\begin{table}[ht!]
\centering
\begingroup\footnotesize
\begin{tabular}{rrrrrrrrrr}
\hline\hline
& fertility & servo & machine & yacht & autompg & boston & pendulum & energy & concrete\\ 
\hline
$n$: & 100 & 167 & 209 & 308 & 392 & 506 & 630 & 768 & 1030 \\
$p = d$: & 9 & 4 & 7 & 6 & 7 & 13 & 9 & 8 & 8 \\
\hline\hline
\end{tabular}
\endgroup
\caption{Overview of data sets.}
\label{data_sum}
\end{table}

We use $10$-fold cross-validation to analyze the prediction accuracy. All input features $S$ for the Gaussian process models are standardized to have mean zero and standard deviation one, where the parameters for the standardization are calculated on the training data only. Tuning parameters for independent tree-boosting and the GPBoost algorithm are chosen by doing an additional inner $4$-fold cross-validation on every training data set. We use the same set of candidate tuning parameters as in all experiments above and an exponential covariance function. 

In the machine learning Gaussian process literature, probabilistic predictions are often evaluated using the average (univariate Gaussian) test negative log-likelihood (NLL). We also include the average test NLL given by
$$\frac{1}{n_p}\sum_{i=1}^{n_p}0.5(y_{p,i}-\mu_{p,i})^2/\sigma_{p,i}^2 + 0.5\log(\sigma_{p,i}^22\pi) $$
 as prediction accuracy measure. The results shown in the following for the test NLL are obtained by also using the test NLL as criterion for choosing tuning parameters in the inner 4-fold cross-validation. 

Table \ref{results_UCI_data} shows average prediction accuracy measures for the different methods. In parentheses are standard errors. Note that such standard errors have to be taken with a grain of salt.\footnote{First, the standard errors are calculated assuming independence of the sample splits which likely does not hold true given that the training data sets for the different sample splits have substantial overlap. Further, similarly as mentioned for the simulated experiments in Section \ref{res_sim}, such univariate standard errors do not reflect correlation among methods across different sample splits. The main reason we report such standard errors is that this is common practice in the machine learning literature.} We observe that the GPBoost algorithm has higher prediction accuracy compared to independent boosting and Gaussian process regression for the majority of the data sets and prediction accuracy measures. Specifically, the GPBoost algorithm has an average rank of approximately $1.33$ for the test RMSE and $1.22$ for the CRPS and the test NLL. Further, in the few cases where either independent boosting or Gaussian process regression has the highest prediction accuracy, the GPBoost algorithm always ranks second.

\begin{table}[ht!]
\centering
\begingroup\footnotesize
\begin{tabular}{rlll}
  \hline
\hline
  & GPBoost & GP & LSBoost \\ 
  \hline
\textbf{RMSE} &  &  &  \\ 
  fertility & 0.171 (0.011) & 0.182 (0.0092) & \textbf{0.166} (0.011) \\ 
  servo & \textbf{0.270} (0.020) & 0.281 (0.020) & 0.288 (0.026) \\ 
  machine & \textbf{0.342} (0.013) & 0.368 (0.014) & 0.347 (0.012) \\ 
  yacht & \textbf{0.492} (0.077) & 2.47 (0.35) & 0.686 (0.090) \\ 
  autompg & 2.53 (0.18) & \textbf{2.52} (0.19) & 2.79 (0.24) \\ 
  boston & \textbf{2.81} (0.22) & 2.96 (0.22) & 3.26 (0.19) \\ 
  pendulum & \textbf{1.62} (0.14) & 1.64 (0.14) & 2.23 (0.18) \\ 
  energy & 0.393 (0.018) & 1.32 (0.061) & \textbf{0.330} (0.015) \\ 
  concrete & \textbf{3.96} (0.24) & 5.21 (0.22) & 4.03 (0.20) \\ 
   \hline
Avg. rank & \textbf{1.33} & 2.44 & 2.22 \\ 
   \hline
\textbf{CRPS} &  &  &  \\ 
  fertility & 0.0986 (0.0065) & 0.103 (0.0046) & \textbf{0.0946} (0.0064) \\ 
  servo & \textbf{0.136} (0.011) & 0.152 (0.0076) & 0.146 (0.013) \\ 
  machine & \textbf{0.195} (0.0081) & 0.208 (0.0072) & 0.224 (0.0072) \\ 
  yacht & \textbf{0.259} (0.027) & 1.28 (0.100) & 0.286 (0.031) \\ 
  autompg & \textbf{1.36} (0.075) & 1.36 (0.076) & 1.59 (0.12) \\ 
  boston & \textbf{1.45} (0.085) & 1.48 (0.060) & 1.87 (0.095) \\ 
  pendulum & \textbf{0.692} (0.041) & 0.701 (0.041) & 1.13 (0.071) \\ 
  energy & 0.205 (0.0076) & 0.749 (0.026) & \textbf{0.172} (0.0051) \\ 
  concrete & \textbf{1.99} (0.084) & 2.75 (0.079) & 2.12 (0.059) \\ 
   \hline
Avg. rank & \textbf{1.22} & 2.56 & 2.22 \\ 
   \hline
\textbf{NLL} &  &  &  \\ 
  fertility & -0.294 (0.087) & -0.269 (0.051) & \textbf{-0.339} (0.097) \\ 
  servo & \textbf{0.142} (0.088) & 0.199 (0.052) & 0.416 (0.21) \\ 
  machine & \textbf{0.436} (0.040) & 0.460 (0.036) & 0.587 (0.063) \\ 
  yacht & \textbf{0.738} (0.11) & 2.30 (0.042) & 1.62 (0.30) \\ 
  autompg & \textbf{2.36} (0.079) & 2.37 (0.073) & 2.52 (0.084) \\ 
  boston & 2.46 (0.074) & \textbf{2.45} (0.070) & 2.86 (0.11) \\ 
  pendulum & \textbf{1.55} (0.063) & 1.56 (0.060) & 2.44 (0.090) \\ 
  energy & \textbf{0.439} (0.052) & 1.77 (0.025) & 0.694 (0.073) \\ 
  concrete & \textbf{2.87} (0.075) & 3.08 (0.043) & 3.14 (0.041) \\ 
   \hline
Avg. rank & \textbf{1.22} & 2.22 & 2.56 \\ 
   \hline
\hline
\end{tabular}
\endgroup
\caption{Average prediction accuracy measures when doing 10-fold cross-validation
  for the UCI benchmark data sets. In parenthesis are standard errors. } 
\label{results_UCI_data}
\end{table}

\section{Conclusion}
We have introduced a novel way of combining boosting with Gaussian process and mixed effects models. This allows for relaxing, first, the zero or linearity assumption for the prior mean function in Gaussian process and mixed effects models in a flexible non-parametric way and, second, the independence assumption made in most boosting algorithms. Further, it can be used as an approach for handling high-cardinality categorical variables in tree-boosting. In simulation experiments and real-world applications, we have shown that this leads to improved prediction accuracy compared to existing state-of-the-art methods. We are currently investigating how our approach can be extended to non-Gaussian data using, e.g., a Laplace approximation \citep{sigrist2021latent}. Further, future research can investigate and compare properties of different Gaussian process approximations for large data applied to the GPBoost algorithm. Future research is also required for computationally efficient uncertainty quantification for the estimated predictor function $F(\cdot)$ and for generating probabilistic predictions that also take into account uncertainty due to $F(\cdot)$. Another direction for future work are convergence results concerning the generalization error as well as finite-sample and asymptotic properties of the estimators obtained from the GPBoost algorithm.

\section*{Acknowledgments}
We thank Hansruedi K\"unsch and three anonymous reviewers for their helpful comments which have helped to improve this article. This research was partially supported by the Swiss Innovation Agency - Innosuisse (grant numbers `25746.1 PFES-ES' and `28408.1 PFES-ES').


\clearpage

\begin{appendices}
\section{Which optimization problems are solved by the MERT/MERF and RE-EM tree algorithms and how are they related to the EM algorithm?}\label{MERFalgo}
The MERT and MERF algorithms \citep{hajjem2011mixed, hajjem2014mixed} and the RE-EM tree \citep{sela2012re, fu2015unbiased} algorithm allow for combining trees and random forests with grouped random effects models. \citet{hajjem2011mixed} and \citet{hajjem2014mixed} claim that the MERT and MERF algorithm are versions of the EM algorithm.\footnote{For instance, \citet{hajjem2011mixed, hajjem2014mixed} state that ``The MERT algorithm is the ML-based EM-algorithm in which we replace the linear structure used to estimate the fixed part of the model by a standard tree structure." and that ``It [the MERT algorithm] is basically an iterative call to a standard RT [regression tree] algorithm within the framework of the expectation–maximization (EM) algorithm".} \citet{sela2012re} call their algorithm RE-EM tree algorithm because it is ``reminiscent" of the EM algorithm, but they acknowledge that it is not a true EM algorithm. Unfortunately, it is not clearly explained in the prior literature which optimization problems these algorithms aim to solve and how they are related to the EM algorithm. In this section, we shed some light on this. 

\subsubsection*{The EM algorithm for linear mixed effects models}
We first briefly describe the EM algorithm \citep{dempster1977maximum} for linear mixed effects models. Using the notation of Section \ref{model_def}, the E-step in iteration $t$ of an EM algorithm for mixed effect models works by, first, finding a maximizer for the predictor function $F$ of the multivariate normal likelihood given the current estimate for the covariance matrix $\Psi_t$:
\begin{equation}\label{estep_F}
\hat F_{t+1}=\argmin_F(y-F)^T{\Psi_t}^{-1}(y-F),
\end{equation}
second, conditional on this obtaining predictions for the random effects $$\hat b_{t+1}=Z\Sigma_tZ^T{\Psi_t}^{-1}(y-\hat F_{t+1}),$$ and then using these two quantities to calculate the expectation of the full data log-likelihood; see, e.g., \citet{laird1982random} and \citet{wu2006nonparametric}. The EM algorithm then proceeds with the M-step by maximizing this expected full data log-likelihood to obtain an estimate for $\Psi_{t+1}$ or its parameters $\theta_{t+1}$.

\subsubsection*{The RE-EM tree algorithm}
The RE-EM tree algorithm of \citet{sela2012re} and \citet{fu2015unbiased} iterates between, first, estimating the structure of a tree using an independent normal likelihood obtained after subtracting predicted values of the random effects from the response variable and, second, jointly estimating the leaf values and covariance parameters using a classical linear mixed effects model. This is not an EM algorithm as it does not involve an E-step that calculates an expectation of a full data log-likelihood. It can, however, be interpreted as a component-wise, or coordinate descent, minimization algorithm that iterates between finding an optimizer for (parts of) $F$, the covariance parameters $\theta$, and the random effects $b$. This can be seen by first noting that, regarding $F$ and $\theta$, the minimization problem
\begin{equation}\label{prob1}
(\hat F, \hat \theta) = \argmin_{(F,\theta)}\frac{1}{2}(y-F)^T{\Psi}^{-1}(y-F)+\frac{1}{2}\log\det\left(\Psi\right)
\end{equation}
is equivalent to
\begin{equation}\label{prob2}
(\hat F, \hat \theta, \hat b) = \argmin_{(F,\theta,b)}\frac{1}{2\sigma^2}(y-F-Zb)^T(y-F-Zb)+\frac{1}{2}b^T\Sigma^{-1}b+\frac{1}{2}\log\det\left(\Psi\right).
\end{equation}
Further, the componentwise minimizer for $b$ given $F$ and $\theta$ corresponds to the best linear unbiased estimator for $b$:
\begin{equation*}
\begin{split}
\hat b =&  (Z^TZ+\sigma^2\Sigma^{-1})^{-1}Z^T(y-F)\\
=& \Sigma Z^T(Z\Sigma Z^T+\sigma^2 I_n)^{-1}(y-F).
\end{split}
\end{equation*}
Using the notation of Section \ref{mult_gauss_loss}, $F(\cdot)=h(\cdot;\alpha)^T\gamma$, where $\alpha$ denotes the splits of a tree and $\gamma$ the leaf values, one iteration of the RE-EM tree algorithm can thus be written as
\begin{equation*}
\begin{split}
\hat\alpha_{t+1} =& \argmin_{\alpha} L\left(F=h(\cdot;\alpha)^T\hat \gamma_t,\hat\theta_t,\hat b_t\right),\\
(\hat\gamma_{t+1},\hat\theta_{t+1}) =& \argmin_{\gamma,\theta} L\left(F=h(\cdot;\hat\alpha_{t+1})^T \gamma,\theta,\hat b_t\right),\\
\hat b_{t+1} =& \argmin_{b} L\left(F=h(\cdot;\hat\alpha_{t+1})^T\hat\gamma_{t+1},\hat\theta_{t+1},b\right),
\end{split}
\end{equation*}
where 
$$L\left(F,\theta,b\right)=\frac{1}{2\sigma^2}(y-F-Zb)^T(y-F-Zb)+\frac{1}{2}b^T\Sigma^{-1}b+\frac{1}{2}\log\det\left(\Psi\right).$$
This corresponds to one iteration of a coordinate descent algorithm applied to the minimization problem in \eqref{prob2} which is equivalent to \eqref{prob1}.

\subsubsection*{The MERT and MERF algorithms}
In contrast to a proper E-step of an EM algorithm for mixed effects model, in the ``E-step" of the MERT and MERF algorithms of \citet{hajjem2011mixed} and \citet{hajjem2014mixed}, the predictor function $\hat F_{t+1}$ is not obtained as maximizer of the multivariate normal likelihood as in \eqref{estep_F}. Instead, the following is done. First, an independent normal likelihood obtained after subtracting predicted values for the random effects $\hat b_{t}$ of the previous iteration from the response variable $y$ is used to learn the predictor function $\hat F_{t+1}$ using a regression tree or a random forest:
$$\hat F_{t+1}=\argmin_F\frac{1}{2}(y-F-\hat b_{t})^T(y-F-\hat b_{t}).$$ 
Second, predictions are calculated for the random effects:
$$\hat b_{t+1}=Z\Sigma_tZ^T{\Psi_t}^{-1}(y-\hat F_{t+1}).$$ 
The M-step is then analogous to a correctly specified EM algorithm. It is unclear to us whether and to which quantities the MERT and MERF algorithms converge as it is nowhere shown that they correspond to correctly specified EM algorithms. 

\section{Proofs for the results involving the Vecchia approximation}\label{proofs}
\begin{proof}[Proof of Proposition \ref{GradVecchia}]
	The negative log-likelihood of the Vecchia approximation in \eqref{vecchia_approx} is given by 
	\begin{equation}\label{ll_vecchia}
	\tilde L(y,F,\theta)=\frac{1}{2}(y-F)^T\tilde {\Psi}^{-1}(y-F)+ \frac{1}{2}\sum_{i=1}^n\log D_i+\frac{n}{2}\log(2\pi).
	\end{equation}
	It follows that
	\begin{equation}\label{grad_vecchia}
	\frac{\partial \tilde L(y,F,\theta)}{\partial \theta_k}= \frac{1}{2\sigma^2}(y-F)^T\frac{\partial}{\partial \theta_k}\tilde {\Psi}^{-1}(y-F)+ \frac{1}{2}\sum_{i=1}^n\frac{1}{D_i}\frac{\partial D_i}{\partial \theta_k}.
	\end{equation}
	Further, we have
	\begin{equation*}
	\frac{\partial}{\partial \theta_k}\tilde {\Psi}^{-1} = \frac{\partial B^T}{\partial \theta_k}{D}^{-1}B+ B^T{D}^{-1}\frac{\partial B}{\partial \theta_k}- B^T{D}^{-1}\frac{\partial D}{\partial \theta_k}{D}^{-1}B.
	\end{equation*}
	We thus obtain the result in the proposition by noting that
	$$(y-F)^T\frac{\partial}{\partial \theta_k}\tilde {\Psi}^{-1}(y-F)=2u_k^Tu - u^T\frac{\partial D}{\partial \theta_k} u,$$
	where $u$ and $u_k$ are given in \eqref{zzk_V}.
\end{proof}
	
\begin{proof}[Proof of Proposition \ref{FIVecchia}]
	We have
	$$\frac{\partial \tilde \Psi}{\partial \theta_k}=-B^{-1}\frac{\partial B}{\partial \theta_k}B^{-1}D B^{-T} - B^{-1}D B^{-T}\frac{\partial B^T}{\partial \theta_k}B^{-T} +B^{-1}\frac{\partial D}{\partial \theta_k} B^{-T}.$$
	It follows that
	$$\tilde {\Psi}^{-1}\frac{\partial \tilde \Psi}{\partial \theta_k}= 
	-B^T{D}^{-1}\frac{\partial B}{\partial \theta_k}B^{-1}D B^{-T} 
	-\frac{\partial B^T}{\partial \theta_k}B^{-T} 
	+B^T{D}^{-1}\frac{\partial D}{\partial \theta_k} B^{-T}.$$
	
	We thus have
	\begin{equation*}
	\begin{split}
	\tilde {\Psi}^{-1}\frac{\partial \tilde \Psi}{\partial \theta_k}\tilde {\Psi}^{-1}\frac{\partial \tilde \Psi}{\partial \theta_l}=& 
	B^T{D}^{-1}\frac{\partial B}{\partial \theta_k}B^{-1}\frac{\partial B}{\partial \theta_l}B^{-1}D B^{-T} 
	+\frac{\partial B^T}{\partial \theta_k} {D}^{-1}\frac{\partial B}{\partial \theta_l}B^{-1}D B^{-T}\\ &-B^T{D}^{-1}\frac{\partial D}{\partial \theta_k} {D}^{-1}\frac{\partial B}{\partial \theta_l}B^{-1}D B^{-T}\\
	&+B^T{D}^{-1}\frac{\partial B}{\partial \theta_k}B^{-1}D B^{-T} \frac{\partial B^T}{\partial \theta_l}B^{-T} 
	+\frac{\partial B^T}{\partial \theta_k}B^{-T} \frac{\partial B^T}{\partial \theta_l}B^{-T}\\
	&-B^T{D}^{-1}\frac{\partial D}{\partial \theta_k} B^{-T}\frac{\partial B^T}{\partial \theta_l}B^{-T}\\
	&-B^T{D}^{-1}\frac{\partial B}{\partial \theta_k}B^{-1}\frac{\partial D}{\partial \theta_l} B^{-T}
	-\frac{\partial B^T}{\partial \theta_k}{D}^{-1}\frac{\partial D}{\partial \theta_l} B^{-T}\\
	&+B^T{D}^{-1}\frac{\partial D}{\partial \theta_k} {D}^{-1}\frac{\partial D}{\partial \theta_l} B^{-T}.
	\end{split}
	\end{equation*}
	Due to the cyclicality of the trace, it follows that
	\begin{equation*}
	\begin{split}
	\text{tr}\left(\tilde {\Psi}^{-1}\frac{\partial \tilde \Psi}{\partial \theta_k}\tilde {\Psi}^{-1}\frac{\partial \tilde \Psi}{\partial \theta_l}\right)=& 
	\text{tr}\left(\frac{\partial B}{\partial \theta_k}B^{-1}\frac{\partial B}{\partial \theta_l}B^{-1}\right)
	+\text{tr}\left(\frac{\partial B^T}{\partial \theta_k} {D}^{-1}\frac{\partial B}{\partial \theta_l}B^{-1}D B^{-T}\right)\\
	&-\text{tr}\left(\frac{\partial D}{\partial \theta_k} {D}^{-1}\frac{\partial B}{\partial \theta_l}B^{-1}\right)\\
	&+\text{tr}\left({D}^{-1}\frac{\partial B}{\partial \theta_k}B^{-1}D B^{-T} \frac{\partial B^T}{\partial \theta_l}\right)
	+\text{tr}\left(\frac{\partial B^T}{\partial \theta_k}B^{-T} \frac{\partial B^T}{\partial \theta_l}B^{-T}\right)\\
	&-\text{tr}\left({D}^{-1}\frac{\partial D}{\partial \theta_k} B^{-T}\frac{\partial B^T}{\partial \theta_l}\right)\\
	&-\text{tr}\left({D}^{-1}\frac{\partial B}{\partial \theta_k}B^{-1}\frac{\partial D}{\partial \theta_l}\right)
	-\text{tr}\left(\frac{\partial B^T}{\partial \theta_k}{D}^{-1}\frac{\partial D}{\partial \theta_l} B^{-T}\right)\\
	&+\text{tr}\left({D}^{-1}\frac{\partial D}{\partial \theta_k} {D}^{-1}\frac{\partial D}{\partial \theta_l}\right).
	\end{split}
	\end{equation*}
	Since $\frac{\partial B}{\partial \theta_k}$ is lower triangular with zeros on the diagonal, we obtain
	\begin{equation*}
	\begin{split}
	\text{tr}\left(\tilde {\Psi}^{-1}\frac{\partial \tilde \Psi}{\partial \theta_k}\tilde {\Psi}^{-1}\frac{\partial \tilde \Psi}{\partial \theta_l}\right)=& 
	\text{tr}\left(\frac{\partial B^T}{\partial \theta_k} {D}^{-1}\frac{\partial B}{\partial \theta_l}B^{-1}D B^{-T}\right)+\text{tr}\left({D}^{-1}\frac{\partial B}{\partial \theta_k}B^{-1}D B^{-T} \frac{\partial B^T}{\partial \theta_l}\right)\\
	&+\text{tr}\left({D}^{-1}\frac{\partial D}{\partial \theta_k} {D}^{-1}\frac{\partial D}{\partial \theta_l}\right)\\
	=& 
	2\text{tr}\left(B^{-T}\frac{\partial B^T}{\partial \theta_k} {D}^{-1}\frac{\partial B}{\partial \theta_l}B^{-1}D \right)+\text{tr}\left({D}^{-1}\frac{\partial D}{\partial \theta_k} {D}^{-1}\frac{\partial D}{\partial \theta_l}\right),
	\end{split}
	\end{equation*}
	and the statement in \eqref{FIVecciaentry} follows.
\end{proof}

\begin{proof}[Proof of Proposition \ref{PredVEcchiaOF}]
	We have 
	$$\begin{pmatrix} B & 0 \\ B_{po}&B_p\end{pmatrix}^T
	\begin{pmatrix} {D}^{-1}& 0 \\ 0&{D_p}^{-1}\end{pmatrix}
	\begin{pmatrix} B & 0 \\ B_{po}&B_p\end{pmatrix}
	=\begin{pmatrix} B_o^T{D_o}^{-1}B_o + B_{po}^T{D_p}^{-1}B_{po} & B_{po}^T{D_p}^{-1}B_p \\ B_p^T{D_p}^{-1}B_{po}&B_p^T{D_p}^{-1}B_p\end{pmatrix}.
	$$
	Since $$\left(B_p^T{D_p}^{-1}B_p\right)^{-1}B_p^T{D_p}^{-1}B_{po}=B_p^{-1}B_{po},$$ the result follows from Theorem 12.2 in \citet{rue2010discrete}.
\end{proof}

\begin{proof}[Proof of Proposition \ref{PredVEcchiaPF}]
	We have 
	$$\begin{pmatrix} B_p & 0 \\ B_{op}&B_o\end{pmatrix}^T
	\begin{pmatrix} {D_p}^{-1} & 0 \\0&{D_o}^{-1}\end{pmatrix} 
	\begin{pmatrix} B_p & 0 \\ B_{op}&B_o\end{pmatrix}
	=\begin{pmatrix} B_p^T{D_p}^{-1}B_p + B_{op}^T{D_o}^{-1}B_{op} & B_{op}^T{D_o}^{-1}B_o \\ B_o^T{D_o}^{-1}B_{op}&B_o^T{D_o}^{-1}B_o\end{pmatrix},
	$$ 
	and the result follows from Theorem 12.2 in \citet{rue2010discrete}.
\end{proof}

\clearpage

\section{Additional results for the simulated experiments}\label{sim_res_oth}
In the following, we present the results for the simulated experiments in Section \ref{simul} for the two predictor functions which are not reported in the body of the article. 

\begin{table}[ht!]
\centering
\begingroup\footnotesize
\scalebox{0.9}{
\begin{tabular}{rlllllll|l}
  \hline
\hline
  & GPBoost & LinearME & LSBoost & CatBoost & mboost & MERF & REEMtree & GPBOOS \\ 
  \hline
RMSE & \bf{1.065} & 1.237 & 1.112 & 1.141 & 1.239 & 1.088 & 1.119 & 1.065 \\ 
  (SE) & (0.00118) & (0.00123) & (0.00154) & (0.00134) & (0.00167) & (0.00118) & (0.00131) & (0.00117) \\ 
  \lbrack p-val\rbrack &  & [1.3e-120] & [9.65e-61] & [1.01e-90] & [2.88e-106] & [3.33e-82] & [3.75e-86] & [0.0624] \\ 
   \hline
RMSE\_new & 1.426 & 1.549 & 1.446 & \bf{1.425} & 1.429 & 1.442 & 1.466 & 1.426 \\ 
  (SE) & (0.00267) & (0.00273) & (0.00286) & (0.0027) & (0.00266) & (0.00262) & (0.00266) & (0.00265) \\ 
  \lbrack p-val\rbrack &  & [3.51e-112] & [5e-44] & [0.00398] & [4.22e-13] & [9.94e-62] & [1.25e-78] & [0.905] \\ 
   \hline
RMSE\_F & \bf{0.2073} & 0.6435 &  &  & 0.2280 & 0.2962 & 0.3916 & 0.2082 \\ 
  (SE) & (0.00129) & (0.00125) &  &  & (0.00161) & (0.000934) & (0.00184) & (0.00128) \\ 
  \lbrack p-val\rbrack &  & [1.64e-144] &  &  & [1.7e-32] & [6.05e-98] & [1.12e-103] & [0.0199] \\ 
  RMSE\_b & \bf{0.3111} & 0.3542 &  &  & 0.6986 & 0.3179 & 0.3242 & 0.3112 \\ 
  (SE) & (0.00113) & (0.00125) &  &  & (0.00229) & (0.00108) & (0.00113) & (0.00111) \\ 
  \lbrack p-val\rbrack &  & [9.74e-76] &  &  & [9.43e-122] & [2.67e-23] & [1.01e-54] & [0.238] \\ 
   \hline
RMSE $\sigma^2_1$ & 0.07503 & 0.08060 &  &  &  & 0.07495 & 0.07532 & 0.07527 \\ 
  Bias $\sigma^2_1$ & 0.003267 & 0.005301 &  &  &  & 0.004099 & 0.0008061 & 0.004154 \\ 
  RMSE $\sigma^2$ & 0.1177 & 0.4077 & 0.08416 & 0.09373 & 0.3652 & 0.08100 & 0.06141 & 0.05531 \\ 
  Bias $\sigma^2$ & -0.1144 & 0.4067 & -0.07411 & 0.09006 & 0.3628 & 0.07779 & 0.05595 & 0.05079 \\ 
   \hline
Time (s) & 0.5760 & 0.02151 & 0.2079 & 5.754 & 9.129 & 367.4 & 0.8807 & 2.539 \\ 
   \hline
\hline
\end{tabular}
}
\endgroup
\caption{Results of the simulated experiments for the grouped data 
                          and the predictor function F = `friedman3'. For the prediction accuracy metrics for $y$, $F$, and $b$, averages over 
                      the simulation runs are reported. Corresponding standard errors are in parentheses.
                      P-values are calculated using paired t-tests comparing the GPBoost algorithm to the other approaches.
                      `GBPOOS' refers to the GPBoostOOS algorithm. Results for the test data with new groups are denoted by `\_new'. The smallest values are in boldface (excluding `GPBOOS'). 
                      An empty value indicates that the required predictions or estimates cannot be calculated.
                      Time refers to the average wall-clock time in seconds.} 
\label{results_friedman3_one_way}
\end{table}

\begin{table}[ht!]
\centering
\begingroup\footnotesize
\scalebox{0.9}{
\begin{tabular}{rlllllll|l}
  \hline
\hline
  & GPBoost & LinearME & LSBoost & CatBoost & mboost & MERF & REEMtree & GPBOOS \\ 
  \hline
RMSE & 1.049 & \bf{1.044} & 1.074 & 1.139 & 1.225 & 1.124 & 1.087 & 1.049 \\ 
  (SE) & (0.00107) & (0.00105) & (0.00159) & (0.00145) & (0.00177) & (0.00127) & (0.00122) & (0.00108) \\ 
  \lbrack p-val\rbrack &  & [4.2e-48] & [1.88e-43] & [7.1e-96] & [4.56e-112] & [2.06e-111] & [1.43e-82] & [5.26e-08] \\ 
   \hline
RMSE\_new & 1.418 & \bf{1.415} & 1.425 & 1.421 & 1.424 & 1.471 & 1.445 & 1.419 \\ 
  (SE) & (0.00256) & (0.00256) & (0.00257) & (0.00255) & (0.0026) & (0.00254) & (0.00262) & (0.00256) \\ 
  \lbrack p-val\rbrack &  & [8.24e-40] & [2.49e-30] & [1.66e-08] & [1.41e-42] & [1.68e-99] & [2.68e-76] & [3.06e-05] \\ 
   \hline
RMSE\_F & 0.1077 & \bf{0.04600} &  &  & 0.1654 & 0.4015 & 0.2954 & 0.1111 \\ 
  (SE) & (0.0014) & (0.00253) &  &  & (0.00125) & (0.000874) & (0.00151) & (0.00145) \\ 
  \lbrack p-val\rbrack &  & [7.6e-63] &  &  & [4.52e-76] & [2.11e-137] & [7.61e-106] & [9.73e-08] \\ 
  RMSE\_b & 0.3068 & \bf{0.3055} &  &  & 0.6920 & 0.3283 & 0.3173 & 0.3069 \\ 
  (SE) & (0.000974) & (0.000972) &  &  & (0.00215) & (0.000933) & (0.00103) & (0.000975) \\ 
  \lbrack p-val\rbrack &  & [1.36e-18] &  &  & [5.84e-122] & [1.43e-60] & [5.33e-49] & [7.62e-05] \\ 
   \hline
RMSE $\sigma^2_1$ & 0.06990 & 0.06986 &  &  &  & 0.07142 & 0.07124 & 0.06998 \\ 
  Bias $\sigma^2_1$ & -0.01381 & -0.01199 &  &  &  & -0.01348 & -0.01570 & -0.01279 \\ 
  RMSE $\sigma^2$ & 0.03107 & 0.02032 & 0.04995 & 0.1274 & 0.3616 & 0.1322 & 0.03238 & 0.02252 \\ 
  Bias $\sigma^2$ & -0.02395 & -0.004143 & -0.03722 & 0.1242 & 0.3595 & 0.1289 & 0.02331 & 0.009169 \\ 
   \hline
Time (s) & 0.6096 & 0.01585 & 0.1182 & 3.278 & 8.070 & 323.7 & 0.5651 & 2.656 \\ 
   \hline
\hline
\end{tabular}
}
\endgroup
\caption{Results of the simulated experiments for the grouped data 
                          and the predictor function F = `linear'. For the prediction accuracy metrics for $y$, $F$, and $b$, averages over 
                      the simulation runs are reported. Corresponding standard errors are in parentheses.
                      P-values are calculated using paired t-tests comparing the GPBoost algorithm to the other approaches.
                      `GBPOOS' refers to the GPBoostOOS algorithm. Results for the test data with new groups are denoted by `\_new'. The smallest values are in boldface (excluding `GPBOOS'). 
                      An empty value indicates that the required predictions or estimates cannot be calculated.
                      Time refers to the average wall-clock time in seconds.} 
\label{results_linear_one_way}
\end{table}

\begin{table}[ht!]
\centering
\begingroup\footnotesize
\scalebox{0.9}{
\begin{tabular}{rlllll|l}
  \hline
\hline
  & GPBoost & LinearGP & LSBoost & mboost & TwoStep & GPBOOS \\ 
  \hline
RMSE & \bf{1.271} & 1.361 & 1.379 & 1.331 & 1.287 & 1.273 \\ 
  (SE) & (0.00456) & (0.00511) & (0.00515) & (0.00519) & (0.00475) & (0.00468) \\ 
  \lbrack p-val\rbrack &  & [2.14e-45] & [2.6e-46] & [1.38e-24] & [1.21e-07] & [0.188] \\ 
  CRPS & \bf{0.7225} & 0.7644 & 0.8092 & 0.7530 & 0.7300 & 0.7174 \\ 
  (SE) & (0.00291) & (0.0028) & (0.00415) & (0.00299) & (0.00304) & (0.0026) \\ 
  \lbrack p-val\rbrack &  & [7.5e-35] & [8.07e-46] & [4.01e-20] & [7.2e-05] & [2.96e-05] \\ 
   \hline
RMSE\_ext & \bf{1.437} & 1.511 & 1.556 & 1.743 & 1.450 & 1.440 \\ 
  (SE) & (0.0107) & (0.0101) & (0.0169) & (0.0412) & (0.0109) & (0.0107) \\ 
  \lbrack p-val\rbrack &  & [7.57e-35] & [3.05e-16] & [1.31e-11] & [2.06e-06] & [0.0808] \\ 
  CRPS\_ext & \bf{0.8158} & 0.8522 & 0.9358 & 1.032 & 0.8259 & 0.8138 \\ 
  (SE) & (0.00652) & (0.00581) & (0.013) & (0.0299) & (0.00703) & (0.00619) \\ 
  \lbrack p-val\rbrack &  & [4.31e-27] & [5.49e-21] & [3.83e-11] & [9.96e-07] & [0.0648] \\ 
   \hline
RMSE\_sum & \bf{11.60} & 11.87 & 14.16 & 17.50 & 11.76 & 11.63 \\ 
  (SE) & (0.208) & (0.195) & (0.319) & (0.726) & (0.21) & (0.208) \\ 
  \lbrack p-val\rbrack &  & [1.35e-05] & [2.06e-18] & [1.27e-12] & [0.00157] & [0.227] \\ 
  CRPS\_sum & \bf{6.342} & 6.480 & 8.604 & 10.21 & 6.495 & 6.330 \\ 
  (SE) & (0.105) & (0.098) & (0.219) & (0.444) & (0.116) & (0.103) \\ 
  \lbrack p-val\rbrack &  & [9.77e-05] & [3.63e-24] & [4.58e-14] & [0.00011] & [0.438] \\ 
   \hline
RMSE\_F & \bf{0.5061} & 0.6870 &  & 0.5360 & 0.5360 & 0.5096 \\ 
  (SE) & (0.007) & (0.00571) &  & (0.00732) & (0.00732) & (0.00719) \\ 
  \lbrack p-val\rbrack &  & [1.98e-54] &  & [7.62e-10] & [7.62e-10] & [0.279] \\ 
  RMSE\_b & \bf{0.6690} & 0.6790 &  & 0.6775 & 0.6775 & 0.6676 \\ 
  (SE) & (0.00458) & (0.00531) &  & (0.00504) & (0.00504) & (0.00462) \\ 
  \lbrack p-val\rbrack &  & [0.000699] &  & [4.82e-05] & [4.82e-05] & [0.212] \\ 
   \hline
RMSE $\sigma^2_1$ & 0.2480 & 0.2763 &  &  & 0.3507 & 0.2473 \\ 
  Bias $\sigma^2_1$ & -0.006707 & -0.001842 &  &  & -0.2370 & 0.0006891 \\ 
  RMSE $\rho$ & 0.02832 & 0.02810 &  &  & 0.02905 & 0.02805 \\ 
  Bias $\rho$ & -0.005450 & -0.004948 &  &  & -0.004900 & -0.006572 \\ 
  RMSE $\sigma^2$ & 0.3187 & 0.3951 & 0.4018 & 0.4257 & 0.1943 & 0.2367 \\ 
  Bias $\sigma^2$ & -0.2290 & 0.3643 & -0.1137 & 0.3940 & -0.08988 & 0.1921 \\ 
   \hline
Time (s) & 21.60 & 0.6269 & 0.09670 & 2.857 & 0.5620 & 32.14 \\ 
   \hline
\hline
\end{tabular}
}
\endgroup
\caption{Results of the simulated experiments for the spatial data 
                          and the predictor function F = `friedman3'. For the prediction accuracy metrics for $y$, $F$, and $b$, averages over 
                      the simulation runs are reported. Corresponding standard errors are in parentheses.
                      P-values are calculated using paired t-tests comparing the GPBoost algorithm to the other approaches.
                      `GBPOOS' refers to the GPBoostOOS algorithm. Results for the ``extrapolation" test data are denoted by `\_ext', and results for the predictions of sums are denoted by `\_sum'. The smallest values are in boldface (excluding `GPBOOS'). 
                      An empty value indicates that the required predictions or estimates cannot be calculated.
                      Time refers to the average wall-clock time in seconds.} 
\label{results_friedman3_spatial}
\end{table}

\begin{table}[ht!]
\centering
\begingroup\footnotesize
\scalebox{0.9}{
\begin{tabular}{rlllll|l}
  \hline
\hline
  & GPBoost & LinearGP & LSBoost & mboost & TwoStep & GPBOOS \\ 
  \hline
RMSE & 1.205 & \bf{1.183} & 1.328 & 1.276 & 1.214 & 1.208 \\ 
  (SE) & (0.00422) & (0.0039) & (0.00594) & (0.00505) & (0.00464) & (0.00441) \\ 
  \lbrack p-val\rbrack &  & [4.69e-30] & [1.79e-52] & [3.41e-47] & [1.95e-06] & [0.00153] \\ 
  CRPS & 0.6807 & \bf{0.6677} & 0.7632 & 0.7222 & 0.6861 & 0.6819 \\ 
  (SE) & (0.00246) & (0.00219) & (0.00431) & (0.00296) & (0.00277) & (0.00248) \\ 
  \lbrack p-val\rbrack &  & [2.5e-29] & [2.73e-44] & [1.75e-46] & [4.65e-05] & [0.0474] \\ 
   \hline
RMSE\_ext & 1.392 & \bf{1.373} & 1.518 & 1.700 & 1.400 & 1.395 \\ 
  (SE) & (0.00979) & (0.00965) & (0.0155) & (0.0403) & (0.00981) & (0.01) \\ 
  \lbrack p-val\rbrack &  & [8.13e-20] & [2.03e-13] & [4.93e-12] & [3.78e-05] & [0.00458] \\ 
  CRPS\_ext & 0.7879 & \bf{0.7764} & 0.8871 & 1.004 & 0.7924 & 0.7890 \\ 
  (SE) & (0.0058) & (0.00562) & (0.0112) & (0.0289) & (0.00584) & (0.00578) \\ 
  \lbrack p-val\rbrack &  & [1.82e-19] & [3.68e-15] & [1.26e-11] & [6.05e-05] & [0.0728] \\ 
   \hline
RMSE\_sum & 11.44 & \bf{11.32} & 14.15 & 17.29 & 11.51 & 11.49 \\ 
  (SE) & (0.198) & (0.196) & (0.287) & (0.702) & (0.198) & (0.201) \\ 
  \lbrack p-val\rbrack &  & [4.8e-05] & [1.78e-16] & [1.11e-13] & [0.0177] & [0.00526] \\ 
  CRPS\_sum & 6.152 & \bf{6.070} & 8.339 & 9.928 & 6.200 & 6.177 \\ 
  (SE) & (0.0969) & (0.0952) & (0.183) & (0.42) & (0.0986) & (0.0979) \\ 
  \lbrack p-val\rbrack &  & [2.88e-07] & [9.54e-23] & [3.69e-15] & [0.00553] & [0.00862] \\ 
   \hline
RMSE\_F & 0.3265 & \bf{0.2099} &  & 0.3568 & 0.3568 & 0.3382 \\ 
  (SE) & (0.0102) & (0.013) &  & (0.0111) & (0.0111) & (0.0106) \\ 
  \lbrack p-val\rbrack &  & [1.32e-32] &  & [6.59e-11] & [6.59e-11] & [4.45e-05] \\ 
  RMSE\_b & 0.6662 & \bf{0.6600} &  & 0.6721 & 0.6721 & 0.6673 \\ 
  (SE) & (0.0059) & (0.00554) &  & (0.0064) & (0.0064) & (0.00607) \\ 
  \lbrack p-val\rbrack &  & [0.0135] &  & [4e-05] & [4e-05] & [0.0592] \\ 
   \hline
RMSE $\sigma^2_1$ & 0.2279 & 0.2265 &  &  & 0.2483 & 0.2275 \\ 
  Bias $\sigma^2_1$ & -0.03959 & -0.03913 &  &  & -0.1413 & -0.03173 \\ 
  RMSE $\rho$ & 0.03602 & 0.03598 &  &  & 0.03643 & 0.03506 \\ 
  Bias $\rho$ & -0.007792 & -0.007953 &  &  & -0.008058 & -0.009017 \\ 
  RMSE $\sigma^2$ & 0.1644 & 0.1123 & 0.4185 & 0.3645 & 0.1201 & 0.1347 \\ 
  Bias $\sigma^2$ & -0.1222 & -0.02047 & 0.1885 & 0.3444 & -0.05652 & 0.06216 \\ 
   \hline
Time (s) & 26.46 & 0.6568 & 0.06625 & 1.989 & 0.5880 & 43.50 \\ 
   \hline
\hline
\end{tabular}
}
\endgroup
\caption{Results of the simulated experiments for the spatial data 
                          and the predictor function F = `linear'. For the prediction accuracy metrics for $y$, $F$, and $b$, averages over 
                      the simulation runs are reported. Corresponding standard errors are in parentheses.
                      P-values are calculated using paired t-tests comparing the GPBoost algorithm to the other approaches.
                      `GBPOOS' refers to the GPBoostOOS algorithm. Results for the ``extrapolation" test data are denoted by `\_ext', and results for the predictions of sums are denoted by `\_sum'. The smallest values are in boldface (excluding `GPBOOS'). 
                      An empty value indicates that the required predictions or estimates cannot be calculated.
                      Time refers to the average wall-clock time in seconds.} 
\label{results_linear_spatial}
\end{table}

\end{appendices}

\clearpage
	
\bibliographystyle{abbrvnat}
\bibliography{bib_gpboost}

\end{document}